\documentclass{article}
\usepackage{bbm}
\usepackage{caption}
\usepackage{rotating}
\usepackage[maxfloats=100]{morefloats}
\usepackage{listings}
\usepackage{booktabs}
\usepackage{xcolor}
\usepackage[title]{appendix}
\usepackage{makecell} 

\usepackage{multirow}

\usepackage[symbol]{footmisc}

\usepackage{longtable}
\usepackage{subcaption}
\usepackage[numbers]{natbib}
\usepackage{geometry}

\usepackage{verbatim}
\usepackage{xcolor}
\usepackage{graphicx}
\usepackage{enumitem}
\usepackage{comment}
\usepackage{subfiles}

\usepackage{todonotes}
\usepackage{tikz-qtree,tikz-qtree-compat}
\usepackage{scrextend}
\usepackage{framed}
\usepackage{placeins}
\usepackage{setspace}
\usepackage[ruled,vlined]{algorithm2e}
\usepackage{mathtools,collcell,eqparbox}

\usepackage{multicol}
\usepackage{pgfplots}
\usepackage{tikz}
\usepackage{listings}
\usepackage{courier}
\usepackage{url}
\usepackage{color}
\usepackage[title]{appendix}
\usepackage{authblk}

\usepackage{amssymb}
\usepackage{amsthm}
\usepackage{dsfont}
\usepackage{amsmath}

\usepackage[pagebackref = true]{hyperref}
\hypersetup{
  colorlinks = true,
  linkcolor = teal,
  anchorcolor = teal,
  citecolor = teal,
  filecolor = teal,
  urlcolor = teal
  }

\pgfplotsset{compat=1.15}
\newcounter{BMatrix}

\newcommand{\setmaxwd}[1]{%
 \eqmakebox[BM-\theBMatrix][\BMalign]{$#1$}%
}
\MHInternalSyntaxOn

\MHInternalSyntaxOff
\makeatother

\newtheorem{theorem}{Theorem}
\newtheorem{assumption}{Assumption}

\newtheorem{lemma}{Lemma}

\theoremstyle{definition}
\newtheorem{definition}{Definition}

\theoremstyle{remark}
\newtheorem{remark}{Remark}
\newtheorem*{remark*}{Remark}

\newcommand{\inv}{^{-1}}

\newcommand{\1}{\mathds{1}}

\newcommand{\A}{\mathbb{A}}

\newcommand{\R}{\mathbb{R}}

\newcommand{\Z}{\mathbb{Z}}

\newcommand{\E}{\mathbb{E}}

\newcommand{\var}{\mathrm{Var}}

\newcommand{\cov}{\mathrm{Cov}}

\newcommand{\da}{\downarrow}
\newcommand{\ra}{\rightarrow}
\newcommand{\Ra}{\Rightarrow}

\newcommand{\la}{\leftarrow}

\newcommand{\cd}{\cdot}

\newcommand{\diag}{\mathrm{diag}}

\newcommand{\mrm}[1]{\mathrm{#1}}

\newcommand{\PiH}{\Pi_{\mathrm{H}}}

\newcommand{\cA}{\mathcal{A}}

\newcommand{\cF}{\mathcal{F}}

\newcommand{\cN}{\mathcal{N}}

\newcommand{\cP}{\mathcal{P}}

\newcommand{\cR}{\mathcal{R}}
\newcommand{\cS}{\mathcal{S}}
\newcommand{\cT}{\mathcal{T}}

\newcommand{\cX}{\mathcal{X}}

\newcommand{\cZ}{\mathcal{Z}}

\newcommand{\spnorm}[1]{\left|#1\right|_\mathrm{span}}
\newcommand{\set}[1]{\left\{{#1}\right\}}

\newcommand{\norm}[1]{\left\|#1\right\|}

\newcommand{\norminf}[1]{\left\|#1\right\|_{\infty}}

\newcommand{\abs}[1]{\left|#1\right|}
\newcommand{\sqbk}[1]{\left[ #1 \right]}

\newcommand{\crbk}[1]{\left( #1 \right)}

\numberwithin{equation}{section}

\title{Central Limit Theorem for Two-Time-Scale\\ Approximate Distributionally Robust RL}

\author{Shengbo Wang} 
\author{Zexi Zhang\thanks{The authors contributed equally to this work.}}
\affil{Daniel J. Epstein Department of Industrial and Systems Engineering, University of Southern California}

\date{May 2026}

\begin{document}
\maketitle

\begin{abstract}
Designing model-free algorithms for distributionally robust reinforcement learning (DRRL) poses fundamental challenges. The robust Bellman operator is nonlinear in the transition kernel, which makes one-sample Bellman updates biased, while the adversarial optimization underlying robustness makes robust evaluation computationally demanding. To address these difficulties, we consider the natural small-ambiguity regime under Kullback--Leibler ambiguity sets and propose an approximate DRRL framework based on a first-order expansion of the relevant robust functional. This yields an approximate robust Bellman equation that removes the adversarial optimization while remaining first-order accurate in the ambiguity radius. To learn the fixed point of this approximate equation, we propose Mean-Variance Stochastic Approximation (MVSA), a model-free algorithm that uses only one-sample updates. This is achieved via a lifted stochastic approximation dynamics and a two-time-scale design. We then prove convergence and a central limit theorem for MVSA: its main iterate satisfies a central limit theorem at the canonical $n^{-1/2}$ scale, with explicitly characterized asymptotic covariances. Finally, we validate our theoretical findings with a numerical experiment.
\end{abstract}

\section{Introduction}

We consider robust stochastic control for infinite-horizon discounted Markov decision processes (MDPs) under transition-model misspecification. In many applications, the nominal agent training environment is only an imperfect approximation of the true deployment environment, either because it is estimated from limited data or because the system is subject to perturbations and structural shifts that are not captured by the nominal model \citep{wang2024b}. A standard way to address this issue is to work with a robust Markov decision process \citep{Iyenga2005,Nilim2005,wiesemann2013robust}, in which decisions are evaluated against a family of plausible transition kernels, rather than a single nominal one. Specifically, the robust MDP optimizes
\[
V^\pi(s):=\inf_{P\in\cP} \E^{\pi,P}_s\sqbk{\sum_{t=0}^\infty \gamma^t r(X_t,A_t)}
\]
over admissible adapted policies $\pi\in\PiH$. Here $r$ is the reward function and $\gamma\in(0,1)$ is the discount factor. The robustness is induced by the adversarial infimum over the ambiguity set $\cP$.

In this paper, we consider approximate distributionally robust reinforcement learning (DRRL) under an SA-rectangular Kullback--Leibler (KL) divergence ambiguity set. In this setting, the robust optimal $Q$-function is characterized by the robust Bellman equation
\[
Q^*(s,a)=r(s,a)+\gamma \inf_{P\in \mathcal P_{s,a}} \E_{P}\left[\max_{b\in A} Q^*(X,b)\right],
\]
where $\cP_{s,a}=\set{P:D_{\mrm{KL}}(P\|P_0(\cdot|s,a))\le \delta}$. This Bellman equation has inspired model-free approaches such as Q-learning \citep{Liu2022} and policy gradient \citep{wang2022robust_PG}. The main obstacle, however, is that the robust functional is nonlinear in the transition kernel. When the nominal model is accessed only through samples, this nonlinearity makes one-sample Bellman estimates biased and thereby blocks a genuine one-sample model-free update.

This bottleneck already appears in the DRRL literature. Existing robust Q-learning and policy gradient methods typically control the Bellman bias by using multiple transition samples from a generative model at each state in every update--in fact, $\Omega(\epsilon^{-1})$ samples when $\epsilon$ is the target accuracy \citep{wang2022robust_PG,wang2024vrql}. As a result, the transition probabilities at each state are effectively estimated from multiple samples, so these methods are not model-free in the usual sense. Other approaches provide partial progress. \citet{wang2023b} improves the multilevel Monte Carlo based robust Q-learning method of \citet{zhou2021}. This achieves the canonical $n^{-1/2}$ rate in a generative model setting using only $O(1)$ samples per update. However, it require randomly many and usually more than one samples for each update. On the other hand, \citet{liang2023single} propose a three-time-scale algorithm in the single-trajectory setting and establish its convergence, but do not analyze its convergence rate. \citet{Yang2023} provide the first convergence rate result for a one-sample implementable algorithm in a generative model setting. However, the nested design in \citet{Yang2023} comes at the cost of a slower polynomial rate. Taken together, the existing literature does not reveal whether one can simultaneously achieve a genuinely one-sample implementable DRRL algorithm and the canonical \(n^{-1/2}\) convergence rate.

A secondary challenge is computational. DRRL algorithms often need to repeatedly evaluate the robust functional, which is not only computationally expensive, but also introduces a min-max layer into policy optimization that is known to create stability issues in other machine learning tasks \citep{mescheder2018GAN}. This substantially complicates the implementation, making DRRL formulations much less amenable to large-scale algorithmic development.

To address these difficulties, we first ask whether improving policy robustness truly requires an exact robust MDP formulation. This leads us to explore robust approximation in the small-ambiguity regime, reflecting the modeling assumption that the training environment can provide a reasonably informative approximation of the deployment environment \citep{wang2024vrql}, so that only robust refinements are needed. Specifically, our starting point is a first-order expansion of the KL-robust functional. This replaces the inner optimization in the robust Bellman equation by an explicit correction term, yielding an approximate robust Bellman equation that is much more computationally tractable while remaining first-order accurate in the ambiguity radius. In particular, its fixed point approximates the exact robust solution $Q^*$ with error of order $O(\delta)$.

The approximation alone, however, does not produce a one-sample algorithm, because the resulting Bellman operator is still nonlinear in the transition kernel. To handle this remaining nonlinearity, we introduce a lifted two-time-scale stochastic approximation (SA) scheme, Mean-Variance Stochastic Approximation (MVSA). At a high level, the lift moves the problematic nonlinear correction into auxiliary quantities that can be tracked recursively, while the time-scale separation stabilizes the interaction between these auxiliary recursions and the main Q-update. This is the key idea that makes single-sample updates possible in our approximate DRRL framework.

Our theoretical results separately study the approximation and learning errors. First, we show that the approximate robust Bellman operator is a contraction in the small-ambiguity regime and that its fixed point approximates the robust optimal $Q^*$ with $O(\delta)$ error. Second, we show that MVSA learns the fixed point of the stabilized approximate operator with the canonical stochastic-approximation rate $n^{-1/2}$. Finally, under a unique-greedy-action condition, we establish a central limit theorem with an explicit covariance characterization. Beyond convergence rates, this CLT provides a basis for convergence diagnostics and statistical inference for the learned robust value function.

\subsection{Literature Review}
\paragraph{Distributionally robust reinforcement learning.}
DRRL have been attracting significant research interests, leading to statistically efficient algorithmic designs in both model-based and model-free settings. On the model-based side, finite-sample guarantees under generative models have been established and sharpened to near-minimax-optimal rates for both SA- and S-rectangular ambiguity sets, under discounted and average-reward criteria \citep{panaganti2022,Yang2021,zhou2021,shi2022,Xu2023,shi2023,chen2025avg_rwd,li2025Srec}. By contrast, the statistical complexity of model-free methods is less well understood. Existing work has focused primarily on the SA-rectangular setting, where distributionally robust Q-learning and its variants have been developed with convergence and finite-sample guarantees \citep{Liu2022,wang2023b,Yang2023,wang2024vrql,liang2023single}. Complementing these value-based approaches, a parallel line of work studies policy-gradient methods for robust MDPs, establishing convergence under both rectangular \citep{wang2022robust_PG,Kumar2023,WangHo2023} and non-rectangular \citep{LiKuhn2025,Wangzha2025} ambiguity models. While this literature has largely focused on finite-time convergence and finite-sample complexity bounds, we develop a central limit theory that not only quantifies the convergence rate, but also enables statistical inference for learned robust value estimates.

\paragraph{Stochastic approximation.}
Stochastic approximation originates with \citet{Robbins1951}, and its convergence theory has since been developed \citep{Ljung1977,kushner2003,borkar2008}. Classical central limit theory for single-time-scale SA was established by \citet{Fabian1968} and \citet{Polyak1990}. In the multi-time-scale setting, linear SA recursions were shown to satisfy CLTs with decoupled convergence rates \citep{BORKAR1997,konda2004}, and these results were later extended to nonlinear recursions by \citet{MokkademPelletier2006}. More recently, \citet{chen2022finite_sample_SA} and \citet{chen2024lyapunov} developed general frameworks for finite-sample analysis of SA algorithms driven by martingale and Markovian noise.

\section{KL-Divergence Constrained Robust MDP} \label{sec:prelim}
Consider a SA-rectangular discounted robust MDP with finite state and action spaces $\cS$ and $\cA$, discount factor $\gamma \in (0,1)$, bounded reward function $r: \cS \times \cA \to \R$ with
\(
r_{\max} := \norm{r}_\infty,
\)
and nominal transition kernel $P_0(\cdot | s,a)$ on $\cS$. Throughout we write $\cZ := \cS \times \cA$ with  $d:=|\cZ|=|\cS||\cA|$, and identify vectors in $\R^d$ with functions on $\cZ$. For each state-action pair $z\in\cZ$, the ambiguity set is defined by a KL ball of radius $\delta \ge 0$:
\[
    \mathcal{P}_{z} := \set{ P(\cdot | z) : D_{\mathrm{KL}}(P \| P_0(\cdot\mid z)) \le \delta }.
\]
Then, the joint ambiguity set is $\cP:= \bigtimes_{z\in\cZ} \cP_z$. To avoid heavy notation, we suppress the dependence of $\cP_z$, $\cP$, and all subsequently defined objects on $\delta$.

The objective we consider is the maximization of the robust value functions over all admissible randomized history-dependent policies
\begin{equation}\label{eqn:Vstar}
V^*(s):=\sup_{\pi\in\PiH}\inf_{P\in\cP} \E^{\pi,P}_s\sqbk{\sum_{t=0}^\infty \gamma^t r(X_t,A_t)}
\end{equation}
where $\E_s^{\pi,P}$ is the expectation induced by policy $\pi$, kernel $P$, and initial state $X_0 = s$. 

In this context, dynamic programming theory for robust MDPs shows that optimality is attained within the class of stationary deterministic policies $\Pi$ \citep{wang2024b}, where each $\pi\in\Pi$ is identified with a function $\cS\ra\cA$. To characterize an optimal policy in $\Pi$, we define the robust optimal Q-function $Q^*:\cZ\to\R$ as the unique solution of the robust Bellman equation
\begin{equation}\label{eq:robust-bellman}
    Q^*(z)
    = r(z) + \gamma \inf_{P \in \mathcal{P}_{z}} \mathbb{E}_P\sqbk{\max_{b \in \cA} Q^*(X, b)}
    =: \mathcal{T}[Q^*](z),
\end{equation}
where $\mathcal{T}$ denotes robust Bellman operator. It is well known that $\mathcal{T}$ is a $\gamma$-contraction under $\|\cdot\|_\infty$ \citep{Iyenga2005}, and hence admits a unique fixed point $Q^*$. Then any greedy policy with respect to $Q^*$ is optimal; that is, any policy $\pi^*$ satisfying $\pi^*(s)\in\arg\max_{a\in\cA}Q^*(s,a)$ for all $s\in\cS$ achieves the optimal value in \eqref{eqn:Vstar}. Therefore, DRRL can be achieved by learning an estimator of $Q^*$, and an approximate robust optimal policy can then be obtained by taking a greedy policy. This property has inspired numerous algorithmic developments, including both model-based \citep{shi2022} and model-free \citep{wang2023b} designs.

As motivated in the introduction, the minimization in the robust Bellman operator \eqref{eq:robust-bellman} requires solving a constrained optimization problem, and is therefore a nonlinear functional of the input $Q$. As a result, sample-average-approximation-based estimators for the robust Bellman operator are biased. Thus, for model-free algorithms such as Q-learning \citep{wang2023b} and policy gradient \citep{wang2022robust_PG}, one would usually need $O(\text{poly}(\epsilon\inv))$ samples, where $\epsilon$ is the targeted accuracy, within each Q-update or gradient step to ensure convergence. This creates a bottleneck that substantially limits the design, convergence rate, and generalizability of model-free algorithms.

\section{Approximate Bellman Equation Using First-Order Expansion}\label{sec:expansion}

To address this algorithmic design difficulty, this paper proposes an alternative approach in the small-ambiguity-radius regime, that is, when $\delta$ is small, by developing an approximate DRRL framework based on a first-order expansion of the robust functional. Specifically, we leverage a polynomial asymptotic expansion of the KL-divergence-based worst-case expectation functional, and use its first-order term as a correction to the expectation under $P_0$; see \citet{Lam2016}.

To simplify notation, for any $U:\cZ\to\R$, define the max-over-actions map
\[
v[U](s) := \max_{a \in \cA} U(s,a)
\]
for all $s \in \cS$. Since $\cZ$ is finite, any $U:\cZ\ra\R$ is bounded and $\mathbb{E}_{P_0}[e^{\theta v[U](X)}] < \infty$ for all $\theta\in\R$. Then, Theorem~3.1 in \citet{Lam2016} implies that 
\begin{equation}
  \inf_{P:\, D_{\mathrm{KL}}(P \| P_0) \le \delta} \mathbb{E}_{P}[v[U](X)]
  = \mathbb{E}_{P_0}[v[U](X)]
    - \sqrt{2\,\operatorname{Var}_{P_0}(v[U](X))}\delta^{1/2}
    + O(\delta)
  \label{eq:kl-expansion}
\end{equation}
as $\delta \to 0$. We note that Theorem~3.1 in \citet{Lam2016} is stated under the additional assumption that $v[U](X)$ is not constant a.s.$P_0$. However, when $v[U](X)$ is constant a.s.$P_0$, \eqref{eq:kl-expansion} trivially holds.

\subsection{Approximate Bellman Equation and Error Bound}

For each $z \in \cZ$ and $\phi:\cX\ra\R$, we write
$\E_z[\phi(X)] := \E_{P_0(\cdot\mid z)}[\phi(X)]$,  $\var_z(\phi(X)) := \var_{P_0(\cdot\mid z)}(\phi(X))$, and $\cov_z(\phi(X),\psi(X)) := \cov_{P_0(\cdot\mid z)}(\phi(X),\psi(X)).$  Moreover, define following operators $\mu,\nu,\sigma:\R^d\to\R^d$ as
\begin{equation}\label{eqn:def_mu_nu_sigma}
\begin{aligned}
    \mu[U](z) &:= \E_z\sqbk{v[U](X)}
    = \E_z\sqbk{\max_{b \in \cA} U(X,b)}, \\
    \nu[U](z) &:= \E_z\sqbk{v[U](X)^2}
    = \E_z\sqbk{\crbk{\max_{b \in \cA} U(X,b)}^2}, \\
    \sigma[U](z) &:= \sqrt{\nu[U](z) - \mu[U](z)^2} = \sqrt{\var_z(v[U](X))},
\end{aligned}
\end{equation}
for every $z\in\cZ$. We then treat $\mu[U]$, $\nu[U]$, and $\sigma[U]$ as functions $\cZ\ra\R$. These abbreviations are used throughout the paper.

Applying \eqref{eq:kl-expansion} to \eqref{eq:robust-bellman} and use the notation defined above, we obtain $$Q^* = r + \gamma\,\mu[Q^*] - \gamma\sqrt{2\delta}\sigma[Q^*] + O(\delta).$$ Therefore, when $\delta$ is small, we expect $Q^*$ to be well approximated by the solution of the same equation with the $O(\delta)$ term removed. This motivates the following approximate robust Bellman operator, whose fixed point can serve as an approximation to $Q^*$.

\begin{definition}\label{def:approx-bellman}
For $\delta \ge 0$ and $U:\cZ\ra \R$, define
$\mathcal{R}[U] := r + \gamma\, \mu[U] -\gamma\sqrt{2\delta} {\sigma}[U]$.
\end{definition}

The approximate operator $\mathcal{R}$ inherits the Lipschitz property of $\cT$ under $\norm{\cd}_\infty$, with a slightly worse Lipschitz constant upper bound. In particular, under the following assumption, this Lipschitz property induces contraction.
\begin{assumption}\label{ass:contraction}
The parameters satisfy $L := \gamma(1 + \sqrt{2\delta}) < 1$.
\end{assumption}
This assumption holds for sufficiently small \(\delta\), which is exactly the regime where the first-order expansion \eqref{eq:kl-expansion} is accurate. Since \(\delta\) reflects confidence in the nominal model, this regime is practically the most relevant. Although the range of $\delta$ under Assumption \ref{ass:contraction} may seem restrictive, it is mainly needed for worst-case analysis. In many problems of interest, the following results remain valid for much larger \(\delta\), as illustrated in Section~\ref{sec:numerics}.

\begin{theorem}\label{thm:contraction}
Under Assumption~\ref{ass:contraction}, the operator $\mathcal{R}$ is an $L$-contraction in $\|\cdot\|_\infty$. Consequently, by the Banach fixed point theorem, it has a unique fixed point $U^{*}:\cZ\ra \R$ satisfying $U^{*} = \mathcal{R}[U^{*}]$.
\end{theorem}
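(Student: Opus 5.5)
We bound the two parts of $\mathcal{R}$ separately, pointwise in $z$, and then combine them with the triangle inequality. Fix $U,W:\cZ\to\R$ and $z\in\cZ$. Then
\[
\abs{\mathcal{R}[U](z)-\mathcal{R}[W](z)} \le \gamma\abs{\mu[U](z)-\mu[W](z)} + \gamma\sqrt{2\delta}\,\abs{\sigma[U](z)-\sigma[W](z)} .
\]
The whole argument rests on one elementary fact: the max-over-actions map is $1$-Lipschitz in sup norm. For every $s$ we have $\abs{v[U](s)-v[W](s)}\le \max_{b}\abs{U(s,b)-W(s,b)}\le \norminf{U-W}$, since $\abs{\max_b x_b-\max_b y_b}\le\max_b\abs{x_b-y_b}$.

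\textbf{Mean term.} Taking expectations under $P_0(\cdot\mid z)$ gives $\abs{\mu[U](z)-\mu[W](z)}\le \E_z\abs{v[U](X)-v[W](X)}\le\norminf{U-W}$.

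\textbf{Standard-deviation term.} This is the only step that needs care. We will not expand $\sigma=\sqrt{\nu-\mu^2}$, because the square root is not Lipschitz near zero. Instead we treat $\sigma[U](z)$ as an $L^2$ norm: $\sigma[U](z)=\norm{v[U](X)-\mu[U](z)}_{L^2(P_0(\cdot\mid z))}$. By the reverse triangle inequality for the $L^2(P_0(\cdot\mid z))$ norm,
\[
\abs{\sigma[U](z)-\sigma[W](z)} \le \norm{(v[U]-v[W])(X) - \E_z[(v[U]-v[W])(X)]}_{L^2} = \sqrt{\var_z\crbk{(v[U]-v[W])(X)}} .
\]
The centered variable on the right takes values in an interval of length at most $2\norminf{U-W}$. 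One could bound its standard deviation by $\norminf{U-W}$ via Popoviciu's inequality. A simpler route is to use that variance is at most the second moment: $\var_z\crbk{(v[U]-v[W])(X)}\le \E_z\sqbk{(v[U]-v[W])(X)^2}\le \norminf{U-W}^2$. Either way, $\abs{\sigma[U](z)-\sigma[W](z)}\le\norminf{U-W}$.

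\textbf{Conclusion.} Combining the two bounds and taking the supremum over $z$ gives $\norminf{\mathcal{R}[U]-\mathcal{R}[W]}\le \gamma(1+\sqrt{2\delta})\norminf{U-W}=L\norminf{U-W}$. By Assumption~\ref{ass:contraction}, $L<1$, so $\mathcal{R}$ is an $L$-contraction. The space $\R^d$ with $\norminf{\cdot}$ is complete, so the Banach fixed point theorem yields a unique fixed point $U^*=\mathcal{R}[U^*]$.
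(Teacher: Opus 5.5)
Your proposal is correct and follows essentially the same route as the paper, whose proof is written for the stabilized operator $\mathcal{R}_\varepsilon$ and specializes to $\varepsilon=0$. Both split into a mean term and a standard-deviation term, use the same $1$-Lipschitz property of $v[\cdot]$, and reach the same key bound $\abs{\sigma[U](z)-\sigma[W](z)}\le\sqrt{\var_z\crbk{(v[U]-v[W])(X)}}\le\norminf{U-W}$. The only cosmetic difference is that the paper gets this bound by expanding $(\sigma[U](z)-\sigma[W](z))^2$ and using that correlations are at most $1$, which is just the Cauchy--Schwarz form of your reverse triangle inequality in $L^2(P_0(\cdot\mid z))$.
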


An extended version of Theorem~\ref{thm:contraction} is proved in Appendix~\ref{section:proof:thm:contraction}. Since $U^*$ is well defined, it is natural to ask how close the fixed point $U^*$ of $\cR$ is to the true robust Q-function $Q^*$. The following theorem answers this question, and its proof is deferred to Appendix~\ref{section:proof:thm:approx-error}.

\begin{theorem}
\label{thm:approx-error}
Under Assumption~\ref{ass:contraction}. The fixed point $U^*$ of $\mathcal{R}$ and the robust optimal $Q$-function $Q^*$ satisfy
\[
  \|U^* - Q^*\|_\infty \leq \frac{\gamma}{1 - \gamma}  \delta \, |U^*|_{\mathrm{span}}\,,
\]
where $|U^*|_{\mathrm{span}} := \max_{z\in\cZ} U^*(z) - \min_{z\in\cZ} U^*(z).$
\end{theorem}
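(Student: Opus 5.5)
The plan is a perturbation argument built on the contraction property of the exact robust operator. Since $Q^*=\cT[Q^*]$ and $U^*=\cR[U^*]$, we have
\[
\|U^*-Q^*\|_\infty \le \|\cR[U^*]-\cT[U^*]\|_\infty + \|\cT[U^*]-\cT[Q^*]\|_\infty \le \varepsilon + \gamma\|U^*-Q^*\|_\infty ,
\]
where $\varepsilon:=\|\cR[U^*]-\cT[U^*]\|_\infty$ and the second inequality uses that $\cT$ is a $\gamma$-contraction \citep{Iyenga2005}. Rearranging gives $\|U^*-Q^*\|_\infty\le \varepsilon/(1-\gamma)$. Thus it suffices to establish the one-step residual bound $\varepsilon\le \gamma\delta\,\spnorm{U^*}$. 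Assumption~\ref{ass:contraction} is used only so that $U^*$ exists, via Theorem~\ref{thm:contraction}.

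The residual is pointwise in $z$. Fix $z$ and let $f:=v[U^*]$, which satisfies $\spnorm{f}\le s:=\spnorm{U^*}$. Then
\[
\cR[U^*](z)-\cT[U^*](z)=\gamma\Bigl(\E_z f-\sqrt{2\delta\var_z f}-\inf_{D_{\mrm{KL}}(P\|P_0(\cdot|z))\le\delta}\E_P f\Bigr).
\]
So the key step is a \emph{non-asymptotic} version of \eqref{eq:kl-expansion}: for any $f$ with span at most $s$ and any probability measure $P_0$,
\[
\Bigl|\inf_{D_{\mrm{KL}}(P\|P_0)\le\delta}\E_P f-\E_{P_0}f+\sqrt{2\delta\var_{P_0}f}\Bigr|\le \delta s .
\]
The $O(\delta)$ in \eqref{eq:kl-expansion} is only asymptotic, so this explicit constant has to be derived by hand. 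If $f$ is $P_0$-a.s.\ constant, the quantity is trivially zero. Otherwise, I would prove the two one-sided bounds separately.

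\emph{Lower bound on the infimum.} I would use the dual representation
\[
\inf_{D_{\mrm{KL}}(P\|P_0)\le\delta}\E_P f=\sup_{\alpha>0}\bigl\{-\alpha\log\E_{P_0}e^{-f/\alpha}-\alpha\delta\bigr\}.
\]
Next I would apply a Bennett/Bernstein-type bound for a centered variable with range at most $s$:
\[
\log\E e^{-\lambda(f-\E f)}\le \frac{\lambda^2\var f}{2}\cdot\frac{1}{1-\lambda s/3}.
\]
(Alternatively, Hoeffding's lemma can be used only for the higher-order remainder.) Plugging in $\alpha=\sqrt{\var f/(2\delta)}$ gives $\E f-\sqrt{2\delta\var f}$ minus a remainder. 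The task is then to show that this remainder is at most $\delta s$, using $\var f\le s^2/4$.

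\emph{Upper bound on the infimum.} I would exhibit a feasible $P$, namely the exponential tilt $dP\propto e^{-\theta f}dP_0$ with $\theta$ chosen so that $D_{\mrm{KL}}(P\|P_0)=\delta$. Then I would bound $\E_P f$ from above by $\E f-\sqrt{2\delta\var f}+\delta s$ via second-order expansions of the cumulant generating function, controlling the third-order terms by $s$.

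I expect the main obstacle to be obtaining exactly the constant $\delta s$, with no stray numerical factor, in both directions. The cumulant remainders naturally produce terms like $\delta s\cdot c$ with $c$ possibly different from $1$, or terms of size $\delta^{3/2}s^2/\sqrt{\var f}$ when the variance is small. To handle the small-variance regime, I would first try a case split. If $\sqrt{2\delta\var f}\le\delta s$, the crude bounds $|\inf\E_Pf-\E f|\le s\sqrt{\delta/2}$ from Pinsker's inequality are not enough, so I would instead compare directly against $\sqrt{2\delta\var f}$. Otherwise, the variance is large relative to $\delta s^2$, and the Bernstein remainder is absorbed into $\delta s$. If the constant does not come out as $1$, I would at worst obtain the theorem with a universal constant multiplying $\delta\spnorm{U^*}$.
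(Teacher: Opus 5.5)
Your reduction to the one-step residual and your lower bound on the infimum (KL duality, i.e.\ Donsker--Varadhan, plus a Bernstein bound on the centered log-MGF) follow the paper. The gap is that the proposal never establishes the constant $\delta\spnorm{U^*}$, which is the whole content of the theorem, and the route you choose for the upper bound does not deliver it.

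Write $s:=\spnorm{U^*}$, $W:=f-\E_z f$, $\sigma^2:=\var_z f$, $\kappa_3:=\E_z[W^3]$ and $\psi(\theta):=\log\E_z e^{-\theta W}$. For your tilt, $\E_{P_\theta}f=\E_z f-\psi'(\theta)$, where $\theta\psi'(\theta)-\psi(\theta)=\delta$. Such a $\theta$ need not exist: the tilt's KL only increases to $-\log$ of the $P_0(\cdot\mid z)$-mass of the minimizers of $f$, which can be below $\delta$. Expanding gives $\psi'(\theta)=\sqrt{2\delta}\,\sigma+\tfrac23\kappa_3\delta/\sigma^2-\kappa_3\delta/\sigma^2+o(\delta)$. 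The true coefficient $-\kappa_3/(3\sigma^2)$ is at most $s/3$ in absolute value, but only through a cancellation between the shift in $\theta$ and the curvature of $\psi'$. If you control the two third-order terms separately by $|\psi'''|\le s\,\psi''$, as you propose, you get $\tfrac53\delta s$ at leading order. So your fallback with a universal constant is exactly what this route yields, and that is not the stated bound.

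The missing idea is to perturb linearly rather than exponentially: set $dQ_\eta:=(1-\eta W)\,dP_0(\cdot\mid z)$ with $0<\eta<1/s$. Then $\E_{Q_\eta}f=\E_z f-\eta\sigma^2$ holds \emph{exactly}, and all the approximation sits in the KL:
$D_{\mrm{KL}}(Q_\eta\|P_0(\cdot\mid z))=\E_z[(1-\eta W)\log(1-\eta W)+\eta W]\le \eta^2\sigma^2/(2(1-\eta s))$, since $(1-w)\log(1-w)+w\le w^2/(2(1-|w|))$. Choosing $\eta$ with $\eta^2\sigma^2=2\delta(1-\eta s)$ gives a feasible $Q_\eta$ with $\eta\sigma^2=\sqrt{\delta^2s^2+2\delta\sigma^2}-\delta s\ge\sqrt{2\delta}\,\sigma-\delta s$. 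This holds for every $\sigma>0$ with no case split.

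Your lower-bound parameter also needs fixing. With $\alpha=\sigma/\sqrt{2\delta}$, i.e.\ $\lambda=\sqrt{2\delta}/\sigma$, the remainder is $\frac{\delta s/3}{1-\lambda s/3}$. This is finite only when $\sigma>\sqrt{2\delta}\,s/3$, and at most $\delta s$ only when $\sigma\ge s\sqrt{\delta/2}$. The bound $\var_z f\le s^2/4$ does not help, since the difficulty is small variance, and the alternative you sketch for that regime is never specified.

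No split is needed. With Bernstein scale $c$ ($c=s$ in the paper, $c=s/3$ in your sharper version), take $\lambda=\sqrt{2\delta}/(\sigma+c\sqrt{2\delta})$. Then $\lambda/(1-c\lambda)=\sqrt{2\delta}/\sigma$, and
$\delta/\lambda+\lambda\sigma^2/(2(1-c\lambda))=\sqrt{2\delta}\,\sigma+c\delta$ exactly, for all $\sigma>0$. With $c=s$ this is the paper's choice; your version would even give $\delta s/3$ on this side. In the regime $\sqrt{2\delta}\,\sigma\le\delta s$, the upper bound is in any case trivial from $P=P_0(\cdot\mid z)$.
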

Therefore, Theorem~\ref{thm:approx-error} shows that the approximation error is $O(\delta)$ as $\delta\da 0$, which is consistent with the single-stage expansion in \eqref{eq:kl-expansion}.

\section{A Two-Time-Scale Algorithm}\label{sec:sa}

A first-order expansion makes the robust Bellman operator more tractable, but it does not by itself resolve the difficulty in model-free DRRL: The approximate operator is still a nonlinear functional of the transition kernel through the first-order correction term involving $\sigma[U]$. As a result, a naive one-sample plug-in update remains biased. Thus, to produce a valid one-sample model-free update, additional design is required.

To address this, our key idea is to lift the recursion to a higher-dimensional space by introducing auxiliary estimates for the first two conditional moments of $v[U]$. Specifically, for fixed $U$, \eqref{eqn:def_mu_nu_sigma} shows that $\sigma[U]$ is determined by $\mu[U]$ and $\nu[U]$, and each of these depends \textit{linearly} on the transition kernel. Hence, $\mu[U]$ and $\nu[U]$ can be estimated unbiasedly from a single next-state sample. Then, $\sigma[U]$ can then be estimated from the resulting estimates of $\mu[U]$ and $\nu[U]$.

This leads to our two-time-scale algorithm design: the auxiliary iterates $(m_n,g_n)$ track these moments on a fast time scale, while the main iterate $U_n$ is updated on a slower time scale using only the current moment estimates and no additional transition samples. In this way, the sampling noise is absorbed by the fast recursions, so the slow $U_n$-update is conditionally noise-free given the current iterates. This makes one-sample updating possible and yields the canonical $n^{-1/2}$ rate for the main iterate.
\subsection{Stabilized Operator}

Recall from Definition~\ref{def:approx-bellman} that the approximate operator $\cR$ involves the standard deviation $\sigma[U]$. When $\sigma[U^*](z)=0$ for some $z\in\cZ$, the Bellman operator loses differentiability, which can create stability issues for a naive SA implementation. To address this, we introduce an $\varepsilon$-perturbed standard deviation with a small safety parameter $\varepsilon\geq 0$:
${\sigma}_{\varepsilon}[U] := \sqrt{\nu[U]- \mu[U]^2 + \varepsilon}.$

Note that it is safe to choose $\varepsilon=0$ when the variance is nondegenerate, i.e., when $\sigma[U_\varepsilon^*](z)>0$ for all $z\in\cZ$. In practice, however, one may set a very small value such as $\varepsilon=10^{-6}$ to avoid degeneracy. 

We introduce the corresponding stabilized approximate Bellman operator
\begin{definition}
For $\delta,\varepsilon  \ge 0$ and $U:\cZ\ra\R$, define the stabilized operator 
$\mathcal{R}_\varepsilon[U]
:= r + \gamma \mu[U]
- \gamma \sqrt{2\delta}{\sigma}_{\varepsilon}[U]$.
\end{definition}
\begin{remark}\label{rmk:Reps_has_fixed_pt}
Similar to $\cR \equiv\cR_0$ in Definition \ref{def:approx-bellman}, we show in Theorem \ref{thm:contraction_eps} in the appendix that the operator $\cR_\varepsilon$ is an $L$-contraction in $\norm{\cd}_\infty$ under Assumption \ref{ass:contraction}. So, it has a unique fixed point denoted by $U_\varepsilon^*$. 
\end{remark}

\subsection{Mean-Variance Stochastic Approximation}

We propose a two-time-scale SA algorithm, Mean-Variance Stochastic Approximation (MVSA), to learn the fixed point $U_\varepsilon^*$ of $\cR_\varepsilon$. A clear two-time-scale separation is induced by the following requirements on the step sizes used in MVSA. 
\begin{assumption}
\label{ass:step-sizes}
We choose the step sizes $$\alpha_n = \frac{a}{n+a}\quad\text{and}\quad\beta_n = \frac{b}{(n+a)^\tau},$$ where $a > 1/(2(1 - L))$ is an integer with $L = \gamma(1 + \sqrt{2\delta})$, $0 < b\leq (1+a)^\tau$, and $\tau\in(1/2,1)$.
\end{assumption}

It should be noted that this specific form of the step size below is only for theoretical convenience. More generally, one may choose $\alpha_n\sim an^{-1}$ and $\beta_n\sim bn^{-\tau}$ with real constants $a>1/(2(1-L))$, $b>0$, and $\tau\in(1/2,1)$, and the convergence results should still hold.

Since $\tau<1$, it follows that $\alpha_n/\beta_n\to0$. This is known as the time-scale separation condition in the multi-time-scale algorithm analysis literature. Then, motivated by the two-time-scale model-free design outlined earlier, the MVSA algorithm, summarized in Algorithm \ref{alg:mvsa}, updates the fast and slow iterates as follows. 

\begin{algorithm}[th]
\caption{Mean-Variance Stochastic Approximation (MVSA)}
\label{alg:mvsa}
\DontPrintSemicolon

\KwIn{
Discount factor $\gamma \in (0,1)$; ambiguity radius $\delta \ge 0$; safety parameter $\varepsilon \geq 0$; step-size $\set{\alpha_n,\beta_n:n\ge 1}$; termination iteration $N$.
}
\textbf{Initialize: } $U\la 0;\ m \la 0;\ g\la 1$ in $\R^d$. \\
\For{$n=1,2,\dots,N$}{
Compute the perturbed standard deviation $\sigma \la \sqrt{g-m^2+\varepsilon}$.\\
Compute the new slow iterate
$U'\la U+\alpha_n(r+ \gamma m- \gamma\sqrt{2\delta}\,\sigma- U).$\\
    \ForEach{$z\in\cZ$}{
        Sample $X(z)\sim P_0(\cdot|z)$ and compute $Z(z) \la \max_{a\in\A}U(X(z),a)$. \\
        Update fast iterates
        $m(z)\la m(z) + \beta_n(Z(z)-m(z))$ and 
        $g(z)\la g(z) +\beta_n(Z(z)^2-g(z)).$
    }
Update the slow iterate $U \la U'.$
}
\Return{$U$}
\end{algorithm}

\noindent\textbf{Slow-time-scale update.}
For each $z\in\Z$, we first compute the estimate of the standard deviation
\[
\sigma_{n,\varepsilon}(z) := \sqrt{g_n(z) - m_n(z)^2 + \varepsilon}.
\]
Then, using the slow step size $\alpha_n$, we update the Q-function estimate:
\begin{equation}\label{eqn:slow_update}
U_{n+1}(z) := U_n(z) + \alpha_n\crbk{r(z) + \gamma\, m_n(z) - \gamma\sqrt{2\delta}\sigma_{n,\varepsilon}(z) - U_n(z)}.
\end{equation}
This update drives $U_n$ toward the fixed point of the first-order robust Bellman operator $\mathcal{R}$.

\noindent\textbf{Synchronous sampling.}
For every state-action pair $z \in \Z$, we independently draw a next-state sample $X'_{n+1}(z) \sim P_0(\cdot | z)$ from the nominal transition kernel. We then compute the value estimate
\begin{equation}\label{eqn:sample_Z}
Z_{n+1}(z) := \max_{b \in \A} U_n\crbk{X'_{n+1}(z),\, b},
\end{equation}
which represents a one-sample estimate of the next-state optimal value under the current value estimate $U_n$. 

\noindent\textbf{Fast-time-scale updates.}
Using the fast step size $\beta_n$ with $\tau \in (1/2,1)$, we update the running first and second moment estimates:
\begin{equation}\label{eqn:def_fast_update}
\begin{aligned}
m_{n+1}(z) &:= m_n(z) + \beta_n\crbk{Z_{n+1}(z) - m_n(z)}, \\
g_{n+1}(z) &:= g_n(z) + \beta_n\crbk{Z_{n+1}(z)^2 - g_n(z)},
\end{aligned}
\end{equation}
which track the conditional mean $\E_z[Z_{n+1}(z)]$ and the conditional second moment $\E_z[Z_{n+1}(z)^2]$, respectively.

\section{Convergence and Central Limit Theorem}
\label{sec:clt}
In this section, we establish the convergence and asymptotics of Algorithm~\ref{alg:mvsa}. We first show that, under the contraction property and time-scale separation condition, the last-iterate estimator $U_n$ returned by the algorithm converges to $U_\varepsilon^*$ almost surely. Moreover, under the additional assumption that the optimal policy is unique, $\sqrt{n}(U_n-U_\varepsilon^*)$ converges in distribution to a Gaussian limit. This not only implies that $\norm{U_n-U_\varepsilon^*}_\infty = O_P(n^{-1/2})$, thereby certifying the canonical convergence rate, but also enables statistical inference for optimal decisions based on Gaussian quantiles, providing additional leverage for uncertainty quantification in policy optimization.

Before stating the convergence results, we first characterize the limiting objects to which the iterates of Algorithm~\ref{alg:mvsa} converge. In particular, due to the safety parameter $\varepsilon$, we should expect that the algorithm converges to the unique fixed point $U^*_\varepsilon$ of $\cR_\epsilon$ (c.f. Remark \ref{rmk:Reps_has_fixed_pt}) and the corresponding moments. Specifically, define 
\begin{equation}\label{eq:safe_parameter}
m_{\varepsilon}^{*} := \mu[U_{\varepsilon}^{*}],\quad
g_{\varepsilon}^{*} := \nu[U_{\varepsilon}^{*}],\quad \text{and}\quad
\sigma_{\varepsilon}^{*} := \sqrt{\nu[U_{\varepsilon}^{*}] - \mu[U_{\varepsilon}^{*}]^2+\varepsilon}.
\end{equation}

Moreover, for the CLT, we require an additional structural assumption on the fixed point $U_{\varepsilon}^{*}$.

\begin{assumption}
\label{ass:unique-greedy}
If one chooses $\varepsilon = 0$, assume in addition that $\sigma[U_{\varepsilon}^{*}](z) > 0$ for all $z \in \cZ$. 
Moreover, for each $s \in \mathcal{S}$, there exists a unique greedy action $a^*(s) = \arg\max_{a \in \mathcal{A}}\, U_{\varepsilon}^{*}(s,a).$
\end{assumption}

\begin{remark}\label{rmk:unique_greedy_clt}
This assumption ensures that the Bellman operator is locally smooth near the fixed point, which is needed for the linearization argument underlying the CLT. It is well known that, when the optimal action is non-unique, one should not, in general, expect a Gaussian limiting distribution, since the maximum of the coordinates of a Gaussian vector is typically not Gaussian. In this sense, the assumption is close to necessary for obtaining a Gaussian CLT.
\end{remark}

\begin{theorem}
\label{thm:convergence-clt}
Suppose Assumptions \ref{ass:contraction} and \ref{ass:step-sizes} are in force. Then, $(U_n, m_n, g_n) \to (U_{\varepsilon}^{*},\, m_{\varepsilon}^{*},\, g_{\varepsilon}^{*})$ almost surely as $n \to \infty$, with $$\norminf{U^{*} - U_{\varepsilon}^{*}}\leq \frac{\gamma\sqrt{2\delta\epsilon}}{1-L}.$$

If in addition Assumption \ref{ass:unique-greedy} holds, then
\[
  \begin{bmatrix} \sqrt{n}\,(U_n - U_{\varepsilon}^{*}) \\[4pt] n^{\tau/2}\,\crbk{(m_n, g_n) - (m_{\varepsilon}^{*}, g_{\varepsilon}^{*})} \end{bmatrix}
  \Ra
  \cN\left( 0, \begin{bmatrix} a\Sigma_U & 0 \\ 0 & b\Sigma_{(m,g)} \end{bmatrix} \right),
\]
where the covariance matrices $\Sigma_U$ and $\Sigma_{(m,g)}$ are defined in Section \ref{section:covariances}.
\end{theorem}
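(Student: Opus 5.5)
We split the proof into three parts: almost sure convergence, the bound on $\norminf{U^*-U_\varepsilon^*}$, and the CLT.

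\textbf{The bound on $\norminf{U^*-U_\varepsilon^*}$.} This part is deterministic. Since $\sqrt{x+\varepsilon}-\sqrt{x}\le\sqrt{\varepsilon}$, we have $\norminf{\cR_\varepsilon[U]-\cR[U]}\le\gamma\sqrt{2\delta\varepsilon}$ for every $U$. Remark~\ref{rmk:Reps_has_fixed_pt} makes $\cR_\varepsilon$ an $L$-contraction, so
\[
\norminf{U^*-U^*_\varepsilon}\le \norminf{\cR[U^*]-\cR_\varepsilon[U^*]}+L\norminf{U^*-U^*_\varepsilon},
\]
and rearranging gives the stated bound.

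\textbf{Almost sure convergence.} We use the Borkar two-time-scale ODE method.
\begin{itemize}
\item \emph{Fast recursion.} Write it as $(m,g)_{n+1}=(m,g)_n+\beta_n\big((\mu[U_n],\nu[U_n])-(m,g)_n+\xi_{n+1}\big)$. Here $\xi_{n+1}$ is a martingale difference whose conditional variance is bounded by $C(1+\norminf{U_n}^4)$.
\item \emph{Stability.} First we establish boundedness of the iterates. The slow update is a convex combination of $U_n$ and $r+\gamma m_n-\gamma\sqrt{2\delta}\sigma_{n,\varepsilon}$. Moreover $g_n-m_n^2\ge 0$ is preserved, because a convex combination of a variance-type pair stays in $\{g\ge m^2\}$. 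So $\sigma_{n,\varepsilon}\le\sqrt{g_n+\varepsilon}$. A Lyapunov-type induction on $\max_{k\le n}\norminf{U_k}$ then bounds everything, using contraction with constant $L$ and the bound $|m|,\sqrt{g}\lesssim\max\norminf{U}$. This is Borkar--Meyn style scaling, or directly an induction with martingale tail control.
\item \emph{Fast tracking.} Since $\sum\beta_n^2<\infty$, the martingale noise converges. Because $\alpha_n/\beta_n\to0$, $U_n$ is quasi-static on the fast scale, so $(m_n,g_n)-(\mu[U_n],\nu[U_n])\to0$.
\item \emph{Slow recursion.} It becomes $U_{n+1}=U_n+\alpha_n(\cR_\varepsilon[U_n]-U_n+e_n)$ with $e_n\to0$. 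The map $\sqrt{\cdot+\varepsilon}$ is uniformly continuous on $[0,\infty)$, which gives $e_n\to 0$ even when $\varepsilon=0$.
\item \emph{Conclusion.} The ODE $\dot U=\cR_\varepsilon[U]-U$ has a globally asymptotically stable equilibrium $U_\varepsilon^*$, with Lyapunov function $\norminf{U-U^*_\varepsilon}$. Hence $U_n\to U^*_\varepsilon$, and then $(m_n,g_n)\to(m^*_\varepsilon,g^*_\varepsilon)$ by continuity of $\mu$ and $\nu$.
\end{itemize}

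\textbf{CLT.} We invoke the nonlinear two-time-scale CLT of \citet{MokkademPelletier2006}.
\begin{itemize}
\item \emph{Local smoothness.} Under Assumption~\ref{ass:unique-greedy}, $v[U]$ is linear near $U^*_\varepsilon$: $v[U](s)=U(s,a^*(s))$ in a neighborhood. Hence $\mu$ and $\nu$ are smooth there. Likewise $\sigma_\varepsilon$ is smooth in $(m,g)$ near $(m^*_\varepsilon,g^*_\varepsilon)$, since the variance plus $\varepsilon$ is positive.
\item \emph{Linearization.} Let $P^*$ denote the matrix $P^*U(z)=\E_z[U(X,a^*(X))]$. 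Set $D_m=\gamma\big(I+\sqrt{2\delta}\,\diag(m^*_\varepsilon/\sigma^*_\varepsilon)\big)$ and $D_g=-\gamma\sqrt{2\delta}\,\diag(1/(2\sigma^*_\varepsilon))$. The slow field has Jacobian $-I$ in $U$ and $[D_m, D_g]$ in $(m,g)$. The fast field has Jacobian $-I$ in $(m,g)$ and $\big(P^*,\,2P^*\diag(\cdot)\big)$-type blocks in $U$.
\item \emph{Stability conditions.} The fast block is $-I$, which is trivially Hurwitz. The Schur complement is $-I+D_mP^*+D_g\,2\,\diag(\ldots)P^*$, which equals $J\cR_\varepsilon(U^*_\varepsilon)-I$. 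Its $\ell_\infty$ operator norm is at most $L$ by the contraction of Theorem~\ref{thm:contraction_eps}, so its eigenvalues have real part $\le -(1-L)$.
\item \emph{Step-size condition.} This is where $a>1/(2(1-L))$ enters: it makes $aH+I/2$ Hurwitz, as the slow-scale $1/n$ CLT requires.
\item \emph{Noise.} The slow recursion carries no direct noise. Its noise enters only through $(m,g)$ with the $\beta_n$ noise $\xi$. Since $\tau<1$, the Mokkadem--Pelletier result still yields the $\sqrt n$ rate for $U$ and the $n^{\tau/2}$ rate for $(m,g)$, with asymptotically independent limits. The covariances solve the corresponding Lyapunov equations, where the conditional covariance of $(Z,Z^2)$ at $U^*_\varepsilon$ is the source. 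This gives $a\Sigma_U$ and $b\Sigma_{(m,g)}$.
\item \emph{Remaining hypotheses.} These are a.s.\ convergence (from the first part), bounded $2+\eta$ noise moments (finite support), and Lipschitz remainder terms near the limit.
\end{itemize}

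\textbf{Main obstacle.} We expect the hardest step to be verifying the Mokkadem--Pelletier hypotheses under the exact step-size form, in particular the Hurwitz condition, and handling the local-only smoothness. The plan for the latter is to localize: on the event that the iterates stay in a neighborhood of the limit after some random time, which has probability tending to one, the max is linear, and a truncation argument transfers the CLT.
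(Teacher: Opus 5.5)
Your proposal is correct. For the bound on $\norminf{U^*-U^*_\varepsilon}$ and for the CLT it is essentially the paper's argument. You use the same Mokkadem--Pelletier verification with $Q_{11}=-I$, $Q_{22}=-I$, $Q_{12}=[D_m\ D_g]$. You identify the Schur complement $H$ with $\nabla\cR_\varepsilon[U^*_\varepsilon]-I$, so the sup-norm contraction gives $\Lambda(H)\ge 1-L$. And $a>1/(2(1-L))$ plays the same role as in the paper.

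The genuine difference is the almost-sure convergence. You invoke Borkar's two-time-scale ODE method: quasi-static fast tracking from $\alpha_n/\beta_n\to0$, then the limiting ODE $\dot U=\cR_\varepsilon[U]-U$ with Lyapunov function $\norminf{U-U^*_\varepsilon}$. The paper stays elementary. It writes the tracking error as $E_{n+1}=(1-\beta_n)E_n+\beta_n\xi_{n+1}+\Delta_n$, where $\Delta_n=\mu[U_n]-\mu[U_{n+1}]=O(\alpha_n)$. It applies Robbins--Siegmund to $E_n(z)^2$, with contraction factor $1-\beta_n/2$ and summable remainder $O(\alpha_n^2/\beta_n+\beta_n^2)$. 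It then finishes with a direct pathwise contraction argument for $U_{n+1}=U_n+\alpha_n(\cR_\varepsilon[U_n]-U_n+\eta_n)$ with $\eta_n\to0$.

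Your route is more modular, since it plugs into known extensions such as Markovian or asynchronous sampling. For polynomial step sizes its coupling requirement ($\alpha_n/\beta_n\to0$ plus square summability) is also less restrictive than the paper's $\sum_n\alpha_n^2/\beta_n<\infty$. The price is reliance on black-box theorems whose Lipschitz hypotheses fail for the slow drift when $\varepsilon=0$. You correctly sidestep this by folding the square-root discrepancy into $e_n$ via uniform continuity of $\sqrt{\cdot+\varepsilon}$ on $[0,\infty)$. The paper's route is self-contained and shows exactly where $\tau>1/2$ and $\tau<1$ enter.

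You overcomplicate two places:
\begin{itemize}
\item Boundedness is purely pathwise, since $|Z_{n+1}(z)|\le\norminf{U_n}$ surely. Your convex-combination observation already closes the induction, with no Borkar--Meyn scaling or martingale tail control needed.
\item No truncation is needed for the CLT. The Mokkadem--Pelletier smoothness hypothesis is already local and is paired with almost-sure convergence. The step-size form is handled by the index shift $k=n+a$, which is why $a$ is taken to be an integer.
\end{itemize}

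Two small corrections. In your Schur complement the factor should read $2P^*\diag(U^*_\varepsilon)$, with the diagonal on the right as in your Jacobian bullet, not $2\,\diag(\ldots)P^*$. And the expansion requires a quadratic remainder, which follows from local $C^2$ smoothness rather than from a Lipschitz bound.
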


\subsection{Asymptotic Covariance}\label{section:covariances}
In this section, we specify the covariance matrices associated with the CLT result in Theorem \ref{thm:convergence-clt}. Let $\{X'(z):z \in \cZ\}$ be sampled independently with $X'(z)\sim P_0(\cd|z)$. Define
$Z_{\varepsilon}^*(z) := v[U_{\varepsilon}^*](X'(z)).$ We introduce the following moment quantities: 
\begin{equation}\label{eq:VCW}
\begin{aligned}
V(z) &:= \var\crbk{Z_{\varepsilon}^*(z)} = g_{\varepsilon}^*(z) - m_{\varepsilon}^*(z)^2, \\
C(z) &:= \cov\crbk{Z_{\varepsilon}^*(z),\, Z_{\varepsilon}^*(z)^2} = \E[Z_{\varepsilon}^*(z)^3] - m_{\varepsilon}^*(z)\, g_{\varepsilon}^*(z), \\
W(z) &:= \var\crbk{Z_{\varepsilon}^*(z)^2} = \E[Z_{\varepsilon}^*(z)^4] - g_{\varepsilon}^*(z)^2.
\end{aligned}
\end{equation}

\paragraph{Covariance for the fast iterates.} We define the covariance matrix for the fast iterates as
\[
\Sigma_{(m,g)}  := \frac{1}{2}\begin{bmatrix} \Gamma_{mm} & \Gamma_{mg} \\ \Gamma_{gm} & \Gamma_{gg} \end{bmatrix},
\]
with block entries indexed by $(z, z') \in \Z \times \Z$ given by
\begin{equation}\label{eq:gamma22}
\Gamma_{mm} = \mathrm{diag}\crbk{V}, \quad
\Gamma_{mg} = \mathrm{diag}\crbk{C}, \quad
\Gamma_{gg} = \mathrm{diag}\crbk{W},
\end{equation}
and $\Gamma_{gm} = \Gamma_{mg}^\top = \Gamma_{mg}$. 

\paragraph{Covariance for the slow iterate.} 
Note that under the additional Assumption \ref{ass:unique-greedy}, the greedy action $a^*(s)$ is uniquely defined for all $s\in \cS$. Then, define for $z,z'\in\cZ$, $Q_0(z,z') := P_0(s' \mid z)\, \1\{a' = a^*(s')\}$
and 
\begin{equation}\label{eq:H}
H(z,z') := -I(z,z')  + \crbk{1 + \frac{\sqrt{2\delta}m_{\varepsilon}^*(z)}{\sigma_{\varepsilon}^*(z)} -\frac{\sqrt{2\delta}U_{\varepsilon}^*(z')}{\,\sigma_{\varepsilon}^*(z)}}\gamma Q_0(z,z').
\end{equation}
where $I$ is the identity matrix. Then, the covariance matrix $\Sigma_U$ for the slow iterate is
\begin{equation}\label{eq:sigmaU}
\Sigma_U := \int_0^\infty \exp\sqbk{\crbk{H + \frac{ I}{2a}} t} \Gamma_U \exp\sqbk{\crbk{H + \frac{I}{2a} }^\top t} \mathrm{d}t.
\end{equation}
It is the unique solution to the Lyapunov equation
$\crbk{H +  I/(2a)}\, \Sigma_U + \Sigma_U \crbk{H +  I/(2a)}^\top = -\Gamma_U.$
Here $\Gamma_U$ is a $d \times d$ diagonal matrix 
\begin{equation}\label{eq:gammaU}
\Gamma_U = \diag\sqbk{\crbk{1 + \frac{\sqrt{2\delta}m_{\varepsilon}^*}{\sigma_{\varepsilon}^*}}^2 \gamma^2V  - \crbk{1 + \frac{\sqrt{2\delta}m_{\varepsilon}^*}{\sigma_{\varepsilon}^*}}\crbk{\frac{\sqrt{2\delta}}{\,\sigma_{\varepsilon}^*}} \gamma ^2C + \crbk{\frac{\sqrt{2\delta}}{2\,\sigma_{\varepsilon}^*}}^2 \gamma^2W}.
\end{equation}

\section{Numerical Experiments}\label{sec:numerics}

We evaluate the proposed MVSA algorithm on a classical discrete-time inventory control problem with backlogging under stochastic demand. Our experiments focus on assessing both the approximation quality of the approximate operator and the convergence behavior of the learning algorithm. 

\subsection*{A Robust Inventory Management Model}

We formulate a back-order inventory model as a controlled Markov process $\{(X_t, A_t) : t \ge 0\}$ with finite state and action spaces. The state variable $X_t$ represents the inventory level at the beginning of period $t$, taking values in $\cS := \{-B, -B+1, \dots, 0, \dots, I\},$ where $I>0$ is the inventory capacity and $B>0$ is the maximum allowable backlog. At each period, the inventory manager selects an order quantity from $\mathcal{A} := \{0,1,\dots,O\},$ where $O>0$ is the maximum order size.

At time $t$, an order quantity $A_t$ is selected. Due to the inventory capacity constraint, the effective order is $\widetilde{A}_t = \min\{A_t,\, I - X_t\}.$ The demand process $\{D_t\}$ is assumed to be i.i.d. with support $\{0,1,\dots,D_{\max}\}$ and nominal distribution $P_D$. Given $(X_t, A_t)$ and demand realization $D_t$, the system evolves according to 
\begin{equation}
X_{t+1} = \max\{X_t + \widetilde{A}_t - D_t, -B\}.
\label{eq:inv-dynamics}
\end{equation}
On the other hand, the instantaneous reward collected at time $t$ is given by
\begin{equation}
r(X_t, A_t, X_{t+1})
=
p(X_t - X_{t+1} + \widetilde{A}_t)
+ b \min(X_{t+1}, 0)
- h \max(X_{t+1}, 0)
- c \widetilde{A}_t,
\label{eq:inv-reward}
\end{equation}
where $p,c,h,b > 0$ are the unit sales price, ordering cost, holding cost, and backlog penalty, respectively.

In our numerical experiments, we set
\(
I = 10\), \(B = 5\), \( O = 5\), \( p = 3\), \( c = 2\), \( h = 0.2\), \(b = 3\),
and use the nominal demand distribution
\(
P_D = [0.1,\, 0.2,\, 0.3,\, 0.3,\, 0.1],
\)
supported on $\{0,1,2,3,4\}$.

\subsection*{Approximation Error}

\begin{figure}[t]
    \centering
    \includegraphics[width=0.7\textwidth]{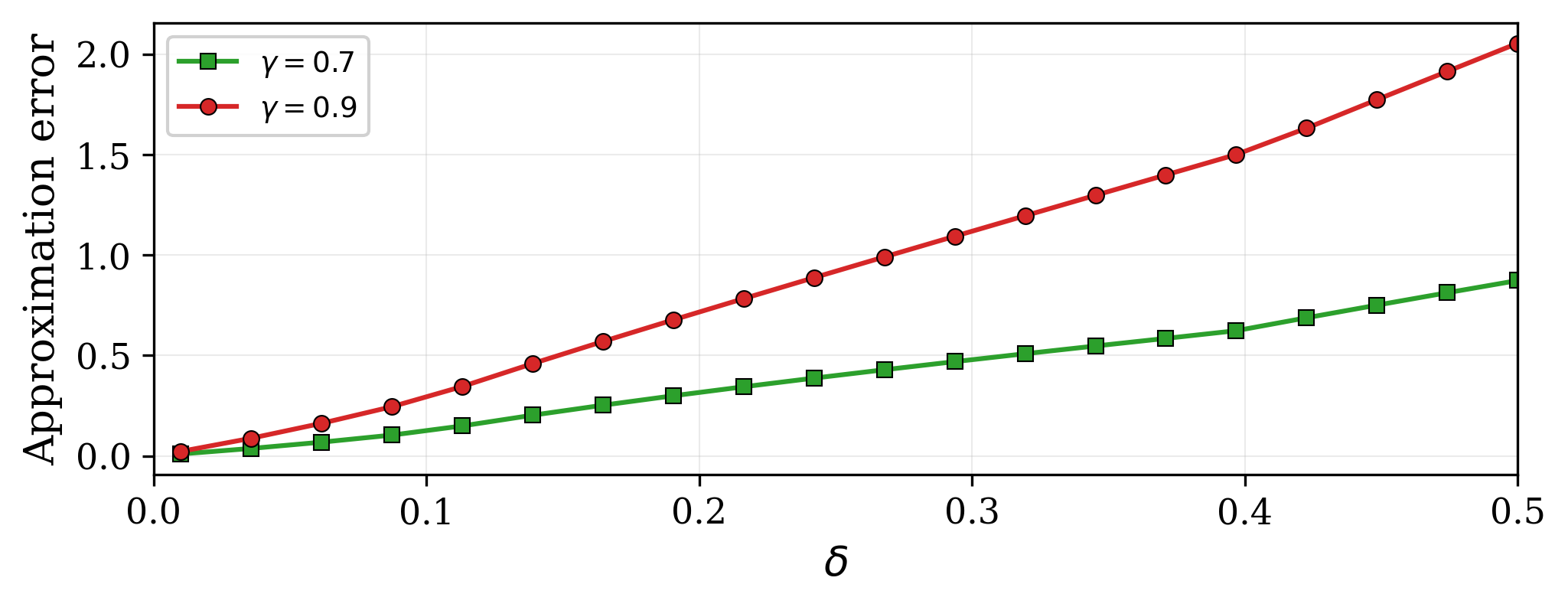}
    \caption{Approximation error $\|U^* - Q^*\|_\infty$ as a function of the ambiguity radius $\delta$.}
    \label{fig:approximation}
\end{figure}

Figure~\ref{fig:approximation} reports the approximation error $\|U^*-Q^*\|_\infty$ as a function of the ambiguity radius $\delta$ uniformly spaced in $[0.01, 0.5]$, for discount factors $\gamma\in\{0.7,0.9\}$. The plot shows that the error increases monotonically with $\delta$. We also observe that both curves grow approximately linearly in $\delta$, suggesting that the approximate Bellman operator $\cR$ captures the desired first-order correction. Moreover, the error curve becomes steeper for larger discount factors. Both phenomena are reflected in the theoretical upper bound in Theorem~\ref{thm:approx-error} even if Assumption \ref{ass:contraction} is violated. 

\subsection*{Convergence and Asymptotic Normality} 
\begin{figure}[t]
    \centering
    \includegraphics[width=\textwidth]{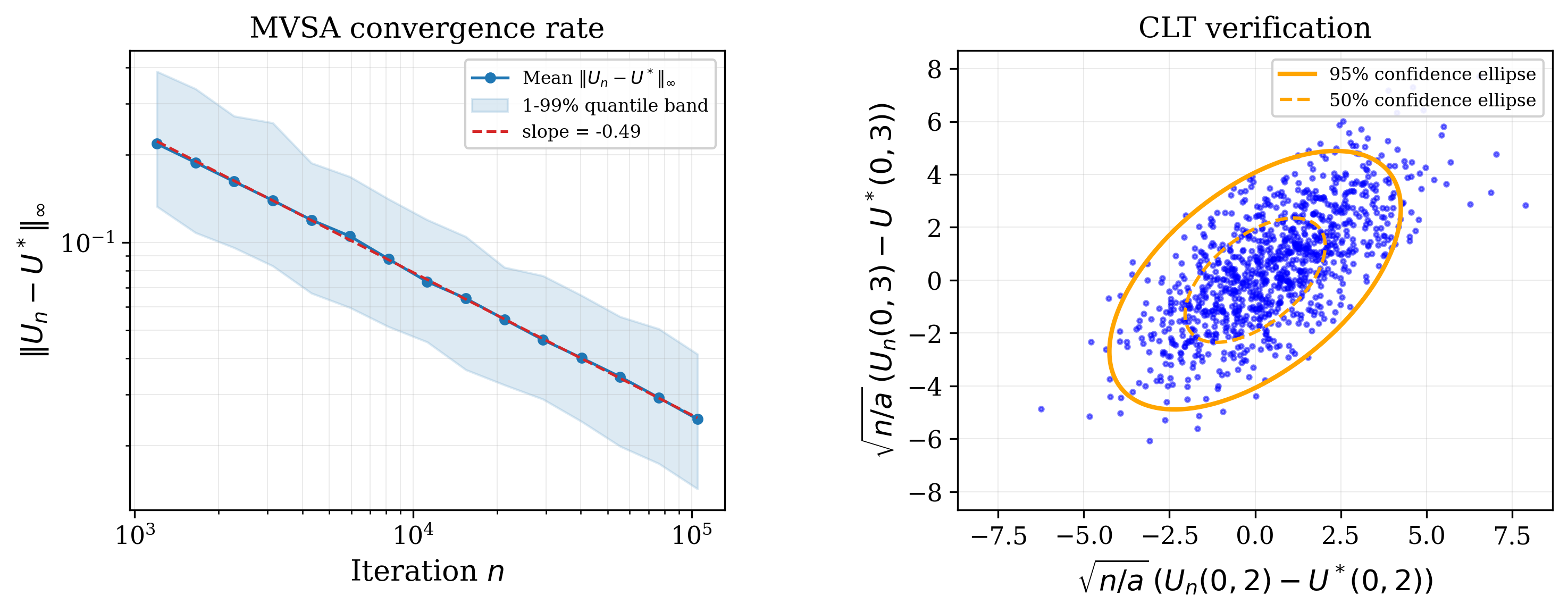}
    \caption{(Left) Estimation error $\|U_n - U^*\|_\infty$ on a log-log scale. (Right) 
Empirical distribution of the scaled error $\sqrt{n/a}[U_n(z) - U^*(z)]$ at $z = (0,2)$ and $(0,3)$.}
    \label{fig:convergence-clt}
\end{figure}
The left panel of Figure~\ref{fig:convergence-clt} visualizes the estimation error $\|U_n-U^*\|_\infty$ as a function of the iteration number $n$ on a log--log scale, averaged over independent runs with the 1--99\% quantile band shaded. Again, we use $\delta=0.1$ and $\gamma=0.7$ in the same instance. The error decreases steadily over iterations, with a fitted slope of $-0.49$, indicating a convergence rate of order $n^{-1/2}$. The stable quantile band confirms that this convergence trend is consistent across runs. 

To validate the asymptotic results, the right panel of Figure~\ref{fig:convergence-clt} plots $1{,}000$ independent realizations of the rescaled error $\sqrt{n/a}[U_n(z)-U^*(z)]$ at two representative state-action pairs $z = (0,2)$ and $(0,3)$ with $n=20{,}000$. The sample cloud is approximately elliptical, and the fitted confidence ellipses align well with the empirical distribution: the $95\%$ theoretical ellipse computed from Theorem~\ref{thm:convergence-clt} attains an empirical coverage of $92.8\%$.

\section{Conclusion}
In this paper, we propose an approximate DRRL framework for the small-ambiguity regime, leading to a one-sample implementable model-free algorithm that attains a CLT under the canonical $n^{-1/2}$ rate. Our approach is motivated by the observation that, in this regime, exact robust MDP formulations, while theoretically informative, can create substantial algorithmic and computational difficulties. By combining a first-order expansion with a lifted two-time-scale SA scheme, we obtain a more scalable learning procedure, MVSA, that avoids direct evaluation of the robust functional while still preserving the desired robustness at first order. We view this as a novel alternative approach toward scalable robust RL. 

However, as an initial attempt, several limitations remain to be addressed. First, the proposed sufficient condition established for the existence, uniqueness, and approximation error bound for the fixed point appears conservative. Figure \ref{fig:approximation} suggests that Theorems \ref{thm:contraction} and \ref{thm:approx-error} should hold well beyond Assumption \ref{ass:contraction}, but a sharper characterization of the approximation theory remains open. Second, while the CLT provides an asymptotic basis for statistical inference, making this practically usable requires covariance estimates that are compatible with the MVSA algorithmic structure. Third, our analysis is developed in the generative model setting. Extending comparable convergence-rate and asymptotic results to the single-trajectory setting is technically nontrivial because of the two-time-scale architecture, but seems plausible and is an important direction for future work.

\bibliographystyle{apalike}
\bibliography{references}

\newpage
\appendix
\appendixpage
\section{Proof of An Extension to Theorem \ref{thm:contraction}}\label{section:proof:thm:contraction}

In this section, we show an extended version of Theorem \ref{thm:contraction}. Fix $\varepsilon \geq 0$, we first recall the definition of the stabilized approximate Bellman operator: 
\[
\cR_{\varepsilon}[U](z)
=
r(z)+\gamma \mu[U](z)-\gamma\sqrt{2\delta}\,
\sqrt{\nu[U](z)-\mu[U](z)^2+\varepsilon}.
\]
for all $z\in Z$.

\begin{theorem}\label{thm:contraction_eps}

Suppose Assumption~\ref{ass:contraction} holds, the operator $\cR_{\varepsilon}[U]$ is an $L$-contraction in $\|\cdot\|_\infty$. Hence it has an unique fixed point $U_\varepsilon^{*} = \mathcal{R_\varepsilon}[U_\varepsilon^{*}]$ where $U_\varepsilon^*:Z\ra\R$.
\end{theorem}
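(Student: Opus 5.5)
We will show that for any $U,W:\cZ\ra\R$ and every $z\in\cZ$,
\[
\abs{\cR_\varepsilon[U](z)-\cR_\varepsilon[W](z)}\le \gamma\abs{\mu[U](z)-\mu[W](z)}+\gamma\sqrt{2\delta}\,\abs{\sigma_\varepsilon[U](z)-\sigma_\varepsilon[W](z)}.
\]
It then suffices to bound each term by $\norminf{U-W}$. Taking the maximum over $z$ gives $\norminf{\cR_\varepsilon[U]-\cR_\varepsilon[W]}\le \gamma(1+\sqrt{2\delta})\norminf{U-W}=L\norminf{U-W}$. The Banach fixed point theorem on the complete space $(\R^d,\norminf{\cd})$ together with Assumption~\ref{ass:contraction} then yields the unique fixed point $U_\varepsilon^*$.

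\textbf{Mean term.} The map $v$ is $1$-Lipschitz from $\norminf{\cd}$ on $\R^{\cZ}$ to $\norminf{\cd}$ on $\R^{\cS}$, because $\abs{\max_a U(s,a)-\max_a W(s,a)}\le\max_a\abs{U(s,a)-W(s,a)}$. Since $\mu[U](z)=\E_z[v[U](X)]$ is an expectation, we get $\abs{\mu[U](z)-\mu[W](z)}\le \norminf{U-W}$.

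\textbf{Standard deviation term (main step).} We write $\sigma_\varepsilon[U](z)=\sqrt{\var_z(v[U](X))+\varepsilon}$. Let $\phi=v[U](X)$ and $\psi=v[W](X)$ under $P_0(\cdot|z)$. Then $\sqrt{\var(\cd)+\varepsilon}$ is the Euclidean norm of the vector $(\phi-\E\phi,\sqrt\varepsilon)$ in the Hilbert space $L^2(P_0(\cdot|z))\times\R$. By the reverse triangle inequality,
\[
\abs{\sigma_\varepsilon[U](z)-\sigma_\varepsilon[W](z)}\le \norm{(\phi-\E\phi)-(\psi-\E\psi)}_{L^2}=\sqrt{\var_z(\phi-\psi)}\le \norm{\phi-\psi}_{L^2}\le \norminf{v[U]-v[W]}\le\norminf{U-W}.
\]
The $\sqrt\varepsilon$ coordinates are identical, so they cancel in the difference. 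This is the only place where any care is needed. A naive approach through the mean-value theorem on $\sqrt{\nu-\mu^2+\varepsilon}$ would blow up when $\varepsilon=0$ and the variance is degenerate. The norm representation avoids this: it handles $\varepsilon=0$, including the case where a variance vanishes, uniformly and without any differentiability.

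Combining the mean and standard deviation bounds gives the $L$-contraction. Theorem~\ref{thm:contraction} is then the special case $\varepsilon=0$.
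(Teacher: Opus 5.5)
Your proof is correct and follows the paper's argument. It uses the same mean/standard-deviation decomposition and the same $1$-Lipschitz bound for $v$. Your standard-deviation bound is exactly the paper's inequality $(\sigma[U_1](z)-\sigma[U_2](z))^2\le\var_z(v[U_1](X)-v[U_2](X))$; the paper derives it from $\mathrm{Corr}\le 1$, which is your reverse triangle inequality in $L^2$. The only difference is in packaging: the paper first removes $\varepsilon$ with the elementary bound $|\sqrt{x+\varepsilon}-\sqrt{y+\varepsilon}|\le|\sqrt{x}-\sqrt{y}|$, while your extra $\sqrt{\varepsilon}$ coordinate in $L^2\times\R$ absorbs that step into the same norm inequality.
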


\begin{proof}[Proof of Theorem \ref{thm:contraction_eps}]

Since $Z = S\times A$ is finite, the space of real-valued functions on $Z$ is a Banach space under the infinity norm $\norminf{\cd}$. Therefore, if we can show that $\cR_\varepsilon$ is a contraction in $\norminf{\cd}$, then the Banach fixed point theorem implies the existence and uniqueness of a fixed-point $U^*_\varepsilon = \cR_\varepsilon[U^*_\varepsilon]$, and hence Theorem \ref{thm:contraction_eps} follows. Since $L < 1$ under Assumption \ref{ass:contraction}, to this end, it suffices to show that for any
$U_1,U_2:Z\ra \R$ and any $z\in Z$,
\begin{equation}\label{eqn:to_prove_R_eps_contract}
|\cR_\varepsilon[U_1](z)-\cR_\varepsilon[U_2](z)|
\le L \|U_1-U_2\|_\infty. 
\end{equation}

To show \eqref{eqn:to_prove_R_eps_contract}, we first decompose
\begin{equation}\label{eqn:tu_R_R_two_term_bd}
\begin{aligned}
|\cR_\varepsilon[U_1](z)-\cR_\varepsilon[U_2](z)|
&\le
\gamma |\mu[U_1](z)-\mu[U_2](z)|\\
&\quad
+\gamma\sqrt{2\delta}\,
\Big|
\sqrt{\nu[U_1](z)-\mu[U_1](z)^2+\varepsilon}
-
\sqrt{\nu[U_2](z)-\mu[U_2](z)^2+\varepsilon}
\Big|. 
\end{aligned}
\end{equation}
We bound each term on the right-hand side separately.

By the definition of $\mu[\cdot]$, we bound the first term by 
\begin{align}\label{eq:mean_bound}
|\mu[U_1](z)-\mu[U_2](z)|
\le
\|v[U_1]-v[U_2]\|_\infty
\le
\|U_1-U_2\|_\infty,
\end{align}
where the last inequality follows from
\begin{align*}
\|v[U_1]-v[U_2]\|_\infty
&=
\max_{s\in S} |v[U_1](s)-v[U_2](s)|\\
&\le
\max_{s\in S}\max_{a\in A}|U_1(s,a)-U_2(s,a)|\\
&=
\|U_1-U_2\|_\infty.   
\end{align*}

We next bound the second term in \eqref{eqn:tu_R_R_two_term_bd} that corresponds to the variance component. Note that for $a,b\geq 0$, $a-b = (\sqrt{a}+\sqrt{b})(\sqrt{a}-\sqrt{b})$. So, for any \(x\geq y\ge 0\), we have
\[
\left|\sqrt{x+\varepsilon}-\sqrt{y+\varepsilon}\right|
=
\frac{x-y}{\sqrt{x+\varepsilon}+\sqrt{y+\varepsilon}},
\quad\text{and}\quad
\left|\sqrt{x}-\sqrt{y}\right|
=
\frac{x-y}{\sqrt{x}+\sqrt{y}}.
\]
Since \(\varepsilon\ge 0\), we have $\sqrt{x+\varepsilon}+\sqrt{y+\varepsilon}\ge \sqrt{x}+\sqrt{y}$,
and therefore
$\left|\sqrt{x+\varepsilon}-\sqrt{y+\varepsilon}\right|
\le
\left|\sqrt{x}-\sqrt{y}\right|.$ The case \(y\ge x\) is identical by symmetry. Applying this identity, we get
\begin{equation}\label{eqn:tu_std_eps_sigma_bd}
\begin{aligned}
&\Big|
\sqrt{\nu[U_1](z)-\mu[U_1](z)^2+\varepsilon}
-
\sqrt{\nu[U_2](z)-\mu[U_2](z)^2+\varepsilon}
\Big|
\\
&\quad \le
\left|
\sqrt{\nu[U_1](z) - \mu[U_1](z)^2}
-
\sqrt{\nu[U_2](z) - \mu[U_2](z)^2}
\right|\\
&\quad =\abs{\sigma[U_1](z) - \sigma[U_2](z)}. 
\end{aligned}
\end{equation}

We note that 
\begin{align*}(\sigma[U_1](z) - \sigma[U_2](z))^2 &= \var_z(v[U_1](X)) + \var_z(v[U_2](X)) - 2\sqrt{\var_z(v[U_1](X))\var_z(v[U_2](X))} \\
&\stackrel{(i)}{\leq} \var_z(v[U_1](X)) + \var_z(v[U_2](X)) -2\cov_z (v[U_1](X),v[U_2](X))\\
&= \var_z (v[U_1](X)-v[U_2](X))\\
&\leq \norminf{v[U_1]-v[U_2]}^2\\
&\leq  \norminf{U_1-U_2}^2
\end{align*}
where $(i)$ follows from the correlation relationship $\mathrm{Corr}(X,Y):=\cov(X,Y)/\sqrt{\var(X)\var(Y)} \leq 1$. 
Therefore, going back to \eqref{eqn:tu_std_eps_sigma_bd}, we have shown that
\begin{equation}\label{eq:variance_bound}
    \Big|
\sqrt{\nu[U_1](z)-\mu[U_1](z)^2+\varepsilon}
-
\sqrt{\nu[U_2](z)-\mu[U_2](z)^2+\varepsilon}
\Big|\leq  \norminf{U_1-U_2}
\end{equation}

Finally, to conclude Theorem \ref{thm:contraction_eps}, we combine \eqref{eqn:tu_R_R_two_term_bd}, \eqref{eq:mean_bound}, and \eqref{eq:variance_bound}, we obtain
\[
|\cR_\varepsilon[U_1](z)-\cR_\varepsilon[U_2](z)|
\le
\gamma\|U_1-U_2\|_\infty
+
\gamma\sqrt{2\delta} \|U_1-U_2\|_\infty = L \|U_1-U_2\|_\infty.
\]
This shows \eqref{eqn:to_prove_R_eps_contract}, and hence completes the proof. 
\end{proof}

\section{Proof of Theorem \ref{thm:approx-error}}\label{section:proof:thm:approx-error}
\begin{proof}
    
By the fixed-point identities $U^{*} = \mathcal{R}[U^{*}]$ and $Q^* = \mathcal{T}[Q^*]$ and the triangle inequality, we have
\begin{equation}\label{eq:triangle}
    \|U^{*} - Q^*\|_\infty
    = \|\mathcal{R}[U^{*}] - \mathcal{T}[Q^*]\|_\infty
    \leq \|\mathcal{R}[U^{*}] - \mathcal{T}[U^{*}]\|_\infty
    + \|\mathcal{T}[U^{*}] - \mathcal{T}[Q^*]\|_\infty.
\end{equation}
Since $\mathcal{T}$ is a $\gamma$-contraction under $\|\cdot\|_\infty$, we have $\|\mathcal{T}[U^{*}] - \mathcal{T}[Q^*]\|_\infty \leq \gamma\|U^{*} - Q^*\|_\infty$. Apply this to \eqref{eq:triangle} and rearrange the inequalities gives
\begin{equation}\label{eq:reduction}
    \|U^{*} - Q^*\|_\infty \leq \frac{1}{1-\gamma}\|\mathcal{R}[U^{*}] - \mathcal{T}[U^{*}]\|_\infty.
\end{equation}
By the definitions, the one-step difference at each $z\in\cZ$ is given by
\[\begin{aligned}
    \mathcal{R}[U^{*}](z) - \mathcal{T}[U^{*}](z)
    &= \gamma\crbk{\E_z[v[U^{*}](X)]
    - \sqrt{2\delta}\sigma[U^{*}] (z)
    - \inf_{P \in \mathcal{P}_{z}} \mathbb{E}_P\sqbk{v[U^{*}](X)}}
\end{aligned}
\]
Combining this with \eqref{eq:reduction}, we see that to prove Theorem \ref{thm:approx-error}, it suffices to show that for every $z \in \cZ$,
\begin{equation}\label{eq:pointwise_goal}
    \Bigl|\inf_{P \in \mathcal{P}_{z}} \mathbb{E}_P\sqbk{v[U^{*}](X)}
    - \crbk{\E_z[v[U^{*}](X)] - \sqrt{2\delta}\sigma[U^{*}] (z)}\Bigr|
    \leq\,\delta \,|U^{*}|_{\mathrm{span}}.
\end{equation}

The remaining of the proof aims to show \eqref{eq:pointwise_goal}. To this end, we derive upper and lower bounds for $\inf_{P \in \mathcal{P}_{z}} \mathbb{E}_P[v[U^{*}](X)]$ with a matching first-order expansion. 

\medskip
\noindent\textbf{Upper bound.}
To upper bound $\inf_{P \in \mathcal{P}_{z}} \mathbb{E}_P[v[U^{*}](X)]$, we can exhibit a feasible distribution $Q \in \mathcal{P}_{z}$, as \begin{equation}\label{eq:feasible}
    \inf_{P \in \mathcal{P}_{z}} \mathbb{E}_P[v[U^{*}](X)] \leq \mathbb{E}_Q[v[U^{*}](X)], \qquad \forall\, Q \in \mathcal{P}_{z}.
\end{equation}

Define the centered random variable
\[
W := v[U^*](X) - \mu[U^*](z) = v[U^*](X) - \E_z [v[U^*](X)].
\]
Note that $\mathbb{E}_z[W] = 0$ and \begin{equation} \label{eqn:tu_EW2_var}\mathbb{E}_z[W^2]= \var_z(v[U^*](X)) =\sigma[U^*](z)^2.
\end{equation}

With $W$, we define a $\eta$-parameterized family of candidate likelihood ratios
\[
L(\eta) := 1 - \eta W.
\]
Note that $|W| \le |U^*|_{\mathrm{span}}$. So, to use $L(\eta)$ as a likelihood ratio, we choose $\eta < 1 / |U^*|_{\mathrm{span}}$. Then, $L(\eta) > 0$ and
$\mathbb{E}_z[L(\eta)] = 1$; i.e. $L(\eta)$ defines a valid likelihood ratio. Hence, we can define the distribution $Q_\eta$ on $S$ by a change of measure
\[
\mathbb{E}_{Q_\eta}[\cdot] := \mathbb{E}_z[\cdot \, L(\eta)].
\]

Next, we choose $\eta$ so that $Q_\eta$ is feasible; i.e. $Q_\eta\in\cP_z$. To this end, we bound the KL divergence between $Q_\eta$ and $P_0(\cdot | z)$. Define for $w > 0 $
\[
g(w) := (1 - w)\log(1 - w) + w.
\]
Since $g(0) = g'(0) = 0$, a second-order expansion gives
\[
g(w) = g''(\xi)w^2 = \frac{w^2}{2(1 - \xi)} \le \frac{w^2}{2(1 - |w|)},
\]
where $\xi\in[0,w]$ is given by the mean value theorem. Using the $\phi$-divergence representation of KL, we obtain
\begin{align*}
D_{\mathrm{KL}}(Q_\eta \,\|\, P_0(\cdot | z)) &= \mathbb{E}_z[g(\eta W)]\\
&\le \mathbb{E}_z\left[\frac{\eta^2 W^2}{2(1 - \eta |W|)}\right]\\
&\leq     \frac{\eta^2 \sigma[U^*](z)^2}{2(1 - \eta |U^*|_{\mathrm{span}})}
\end{align*}
where the last inequality follows from $\eta < 1/\spnorm{U^*}$ and \eqref{eqn:tu_EW2_var}. Then, we enforce the feasibility of $Q_\eta$ by choosing $\eta$ that satisfy
\[
D_{\mathrm{KL}}(Q_\eta \,\|\, P_0(\cdot | z))
\le \frac{\eta^2 \sigma[U^*](z)^2}{2(1 - \eta |U^*|_{\mathrm{span}})}\le \delta. 
\]

If $\sigma[U^*](z) = 0$, the feasibility is trivial: any choice of $\eta < 1/\spnorm{U^*}$ would work. Otherwise, setting the KL constraint equal to $\delta$, we solve the quadratic equation, yielding
\[
\eta_\delta 
= \frac{-2\delta |U^*|_{\mathrm{span}} + \sqrt{4\delta^2 |U^*|_{\mathrm{span}}^2 + 8\sigma[U^*](z)^2 \delta}}{2\sigma[U^*](z)^2} >0.
\]
which satiafies $\eta_\delta^2\sigma^2 = 2(1-\eta_\delta \spnorm{U^*}) \delta$. Moreover, the constraint $1 - \eta_\delta |U^*|_{\mathrm{span}} > 0$ is automatically satisfied for such $\eta_\delta$ which makes $L(\eta_\delta)$ a valid likelihood ratio. Therefore, $Q_{\eta_\delta}\in D_{\mathrm{KL}}(Q_\eta \,\|\, P_0(\cdot | z)) = \cP_z$. 

Then, by \eqref{eq:feasible}, to achieve an upper bound on the infimum, we can exhibit an upper bound on $\E_{Q_{\eta_\delta}} [ v[U^*](X)]$. To this end, we first establish the following lower bound on $\eta_\delta$: 
\[
\eta_\delta 
\ge \frac{-2\delta |U|_{\mathrm{span}} + \sqrt{ 8\sigma[U^*](z)^2 \delta}}{2\sigma[U^*](z)^2}
\ge \frac{\sqrt{2\delta}}{\sigma[U^*](z)} - \frac{\delta |U|_{\mathrm{span}}}{\sigma[U^*](z)^2}.
\]
Then, under $Q_{\eta_\delta} \in \mathcal{P}_{z}$, we have
\begin{equation}\label{eq:Q_etabound}
\begin{aligned}
\mathbb{E}_{Q_{\eta_\delta}}[v[U^*(X)]]
&= \mathbb{E}_z[v[U^*(X)]] - \eta_\delta \, \mathbb{E}_z\left[v[U^*(X)] W\right] \\
&= \mu[U^*](z) - \eta_\delta \, \mathbb{E}_z\left[(\mu[U^*](z) + W)W\right] \\
&\stackrel{(i)}{=} \mu[U^*](z) - \eta_\delta \, \sigma[U^*](z)^2\\
&\le \mu[U^*](z) - \sqrt{2\sigma[U^*](z)^2 \delta} + \delta |U|_{\mathrm{span}},
\end{aligned}
\end{equation}
where $(i)$ uses $\E_z W = 0$ and \eqref{eqn:tu_EW2_var}, and the last inequality follows from the lower bound on $\eta_\delta$. 

Finally, combining \eqref{eq:feasible}, \eqref{eq:Q_etabound}, and the feasibility of $Q_{\eta_\delta}$, we obtain
\begin{equation}\label{eq:upper_bound}
\inf_{P \in \mathcal{P}_z} \mathbb{E}_P[v[U^*(X)]]
\le \mu[U^*](z) - \sqrt{2 \delta}\sigma[U^*](z) + \delta |U|_{\mathrm{span}}.
\end{equation}

\paragraph{Lower bound.}
Observe that
\begin{equation}\label{eq:inf_deviation}
\begin{aligned}
\inf_{Q \in \mathcal{P}_{z}} \mathbb{E}_Q[v[U^{*}](X)] 
&= \E_z[v[U^{*}](X)] + \inf_{Q \in \mathcal{P}_{z}}\crbk{\mathbb{E}_Q[v[U^{*}](X)]- \E_z[v[U^{*}](X)]}\\
&=\mu[U^*](z)+\inf_{Q \in \mathcal{P}_{z}} \mathbb{E}_Q[W]\\
&=\mu[U^*](z)-\sup_{Q \in \mathcal{P}_{z}} \mathbb{E}_Q[-W]
\end{aligned}
\end{equation}
Therefore, a lower bound on $\inf_{Q \in \mathcal{P}_z} \mathbb{E}_Q[v[U^*(X)]]$ can be achieved by controlling the worst-case deviation term $\sup_{Q \in \mathcal{P}_{z}} \mathbb{E}_Q[-W]$. 

To this end, we apply the Donsker--Varadhan variational formula (see, e.g.~\citet{Nguyen2010}), yielding that for any probability measure $Q$ on $S$, 
$$D_{\mathrm{KL}}(Q \,\|\, P)\geq \E_Q[f(X)] - \log \E_P\sqbk{e^{f(X)}}$$
for any bounded measureable $f:S\ra\R$. For any $Q \in \mathcal{P}_z$ and $\lambda > 0$, applying the formula to $f(x) = \lambda( \E_z[ v[U^*](X)]- v[U^*](x)) = -\lambda W$, we have
$$\lambda \, \mathbb{E}_Q[-W]
\le 
D_{\mathrm{KL}}(Q \,\|\, P_0(\cdot | z)) 
+ \log \mathbb{E}_z[e^{-\lambda W}]. $$
Since $Q \in \mathcal{P}_z$, $D_{\mathrm{KL}}(Q \,\|\, P_0(\cdot | z)) \le \delta$. Then, dividing by $\lambda > 0$, we obtain
\begin{equation}\label{eq:DonskerFormula}
\mathbb{E}_Q[-W]
\le 
\frac{\delta}{\lambda}
+ \frac{1}{\lambda} \log \mathbb{E}_z[e^{-\lambda W}].
\end{equation}

We now bound the log-moment generating function. Note that $\E_zW = 0$ and $|W|\leq \spnorm{U^*}$. Then, using the Bernstein inequality \citep[Proposition 2.10]{Wainwright_2019} we have
\[
\log \mathbb{E}_z[e^{-\lambda W}]
\le 
\frac{\lambda^2 \sigma[U^*](z)^2}{2\crbk{1 - \lambda |U^*|_{\mathrm{span}}}}.
\]
for any $\lambda \in (0, 1/|U^*|_{\mathrm{span}})$. Plugging back into \eqref{eq:DonskerFormula} yields
\begin{equation}\label{eqn:tu_EQ_W_bd}
\mathbb{E}_Q[-W]
\le 
\frac{\delta}{\lambda}
+ 
\frac{\lambda \sigma[U^*](z)^2}{2\crbk{1 - \lambda |U^*|_{\mathrm{span}}}},
\end{equation}
for all $Q\in\cP_z$. 

To get an upper bound for \eqref{eq:pointwise_goal}, we choose
$$
\lambda
:=
\frac{\sqrt{2\delta}}{\sigma[U^*](z) + |U^*|_{\mathrm{span}}\sqrt{2\delta}},
$$
which satisfies $ 0< \lambda < 1/|U^*|_{\mathrm{span}}$. Plug this $\lambda$ into \eqref{eqn:tu_EQ_W_bd}, we get
\[
\mathbb{E}_Q[-W]
\le 
\sqrt{2 \delta}\sigma[U^*](z)
+ 
\delta \,|U^*|_{\mathrm{span}},
\]
for all $Q \in \mathcal{P}_z$. Taking the supremum gives
\begin{equation}\label{eq:sup_bound}
\sup_{Q \in \mathcal{P}_z} \mathbb{E}_Q[-W]
\le 
\sqrt{2 \delta}\sigma[U^*](z)
+ 
\delta \,|U^*|_{\mathrm{span}}.
\end{equation}
Then, combining \eqref{eq:sup_bound} with \eqref{eq:inf_deviation}, we obtain
\begin{equation}\label{eq:lower_bound}
\inf_{Q \in \mathcal{P}_z} \mathbb{E}_Q[v[U^*(X)]]
\ge 
\mu[U^*](z)
- \sqrt{2 \delta}\sigma[U^*](z)
- \delta\,|U^*|_{\mathrm{span}}.
\end{equation}

\paragraph{Concluding Theorem \ref{thm:approx-error}. }Finally, we combine the upper bound \eqref{eq:upper_bound} and lower bound \eqref{eq:lower_bound} to obtain 
$$\inf_{P \in \mathcal{P}_z}\mathbb{E}_P[v[U^*(X)]] \in \sqbk{\mu[U^*](z)
- \sqrt{2\delta}\sigma[U^*](z)  \pm \delta\spnorm{U^*}}.$$
This implies \eqref{eq:pointwise_goal} and hence completes the proof.
\end{proof}

\section{Proof of Theorem~\ref{thm:convergence-clt}}

We first establish the error bound with the $\varepsilon$-stabilized approximate Bellman operator. Note that by Theorem \ref{thm:contraction_eps}, both $\cR$ and $\cR_\epsilon$ are $L$ contraction. So, by \eqref{eq:reduction}, we have that
$$\begin{aligned}\norminf{U^*-U^*_\varepsilon} &\leq\frac{1}{1-L}\norminf{\cR[U^*] - \cR_\epsilon[U^*]} \\
&= \frac{\gamma\sqrt{2\delta}\norminf{ \sigma[U^*]- \sigma_\varepsilon[U^*]}}{1-L}\\
&\leq \frac{\gamma\sqrt{2\delta\epsilon}}{1-L} , 
\end{aligned}$$
where the last step we used that for $a\ge0$, $(\sqrt{a+\epsilon}-\sqrt a)^2 = 2a+\epsilon-2\sqrt{a(a+\epsilon)}\leq \epsilon.$

We then split the rest of the proof into two parts. First, we show the almost sure convergence of the iterates $(U_n,m_n,g_n)\ra (U^*_\varepsilon,m^*_\varepsilon,g^*_\varepsilon)$. Then, we establish the central limit theorem that complements this convergence result.

Before we proceed, we first note that for the algorithm to be valid, we need to guarantee that $\sigma_{n,\varepsilon}$ is well-defined. In particular, we inductively check that 
$$g_n(z) - m_n(z)^2 \geq 0.$$

From the specification of Algorithm \ref{alg:mvsa}, we have that $g_1 = 1$ and $m_1 = 0$, the vector of all ones and zeros. So, for the base case, $g_1(z) - m_1(z)^2 = 1\geq 0$. 

To show the induction step, we note that from \eqref{eqn:def_fast_update}, the fast updates can be written as
\begin{equation}\label{eqn:fast_update_rewrite}
\begin{aligned}
  m_{n+1}(z) &= (1-\beta_n)\,m_n(z) + \beta_n\,Z_{n+1}(z),\\
  g_{n+1}(z) &= (1-\beta_n)\,g_n(z) + \beta_n\,Z_{n+1}(z)^2.
\end{aligned}
\end{equation}
A direct expansion yields
\begin{align*}
  g_{n+1}(z) - m_{n+1}(z)^2
  &= (1-\beta_n)\,g_n(z) + \beta_n\,Z_{n+1}(z)^2 - \crbk{(1-\beta_n)\,m_n(z) + \beta_n\,Z_{n+1}(z)}^2 \\
  &= (1-\beta_n)\crbk{g_n(z) - m_n(z)^2} + \beta_n(1-\beta_n)\crbk{Z_{n+1}(z) - m_n(z)}^2 \\
  &\geq 0
\end{align*}
which implies $g_n(z) \geq m_n(z)^2$ for all $n$.

\subsection{Proof of Almost Sure Convergence}\label{app:convergence}
In this section, we prove part (i) of Theorem~\ref{thm:convergence-clt}. 

The argument proceeds by first establishing boundedness of the iterates and controlling the residual errors of the fast variables. We then apply classical convergence results for supermartingale-like processes to show convergence of the fast iterates to a stochastic target. The convergence of the fast and slow iterates to the fixed point is subsequently obtained through a continuity argument.

\paragraph{Boundedness of $(U_n,g_n,m_n)$. }
We establish uniform boundedness of the iterates. Specifically, we define
\begin{equation}\label{eq:B-def}
  B := \frac{r_{\max} + \gamma\sqrt{2\delta\varepsilon}}{1 - \gamma(1 + \sqrt{2\delta})} = \frac{r_{\max} + \gamma\sqrt{2\delta\varepsilon}}{1 - L}.
\end{equation}
We show inductively that for all $n\geq 1$,
\begin{equation}\label{eq:uniform-bounds}
  \|U_n\|_\infty \leq B, \qquad \|m_n\|_\infty \leq B, \qquad 0 \leq \|g_n\|_\infty \leq B^2. 
\end{equation}

From the the initial condition in Algorithm \ref{alg:mvsa}, $B \geq \|U_1\|_\infty \vee \|m_1\|_\infty \vee \sqrt{\|g_1\|_\infty}$. For the induction step, we note that $Z_{n+1}(z) = \max_{b \in \mathcal{A}} U_n(X'_{n+1}(z), b)$. So, assuming that \eqref{eq:uniform-bounds} holds at iteration $n$, it follows that
\[
  |Z_{n+1}(z)| \leq \|U_n\|_\infty \leq B
\]
for all $z\in\cZ$. From \eqref{eqn:fast_update_rewrite} and $\beta_n\leq1$, we have
\[
  \|m_{n+1}\|_\infty \leq (1-\beta_n)\|m_n\|_\infty + \beta_n\|Z_{n+1}\|_\infty.
\]
and 
\[
  \|g_{n+1}\|_\infty \leq (1-\beta_n)\|g_n\|_\infty + \beta_n\|Z_{n+1}\|_\infty^2.
\]
So, $\|m_{n+1}\|_\infty \leq B$ and $0 \leq \|g_n\|_\infty \leq B^2$. This shows the last two bounds in \eqref{eq:uniform-bounds}. 

Next, rewrite the slow update in \eqref{eqn:slow_update} as
\begin{equation}\label{eqn:slow_update_rewrite}
  U_{n+1} = (1-\alpha_n)\,U_n + \alpha_n\crbk{r + \gamma\,m_n - \gamma\sqrt{2\delta}\sqrt{g_n - m_n^2 + \varepsilon}}.
\end{equation}
Since $g_n - m_n^2\geq 0$, we have $$\sqrt{g_n - m_n^2 + \varepsilon} \leq \sqrt{\|g_n\|_\infty} + \sqrt{\varepsilon} \leq B + \sqrt{\varepsilon}.$$ Therefore, \eqref{eqn:slow_update_rewrite} and $\alpha_n\leq1$ imply that
\begin{align}\label{eq:U_inequality}
  \|U_{n+1}\|_\infty
  &\leq (1-\alpha_n)\|U_n\|_\infty + \alpha_n\crbk{r_{\max} + \gamma\|m_n\|_\infty + \gamma\sqrt{2\delta}(\|g_n\|_\infty^{1/2} + \sqrt{\varepsilon})} \notag\\
  &\leq (1-\alpha_n)B + \alpha_n\crbk{r_{\max} + \gamma B + \gamma\sqrt{2\delta}(B + \sqrt{\varepsilon})}.
\end{align}
Recalling the definition of $B$ in \eqref{eq:B-def}, we have $(1-L)B = r_{\max} + \gamma \sqrt{2\delta \epsilon}$. So 
$$r_{\max} + \gamma B + \gamma\sqrt{2\delta}(B+\sqrt{\varepsilon}) = r_{\max} + LB + \gamma \sqrt{2\delta \epsilon}= B.$$
Hence, the right-hand side of \eqref{eq:U_inequality} becomes $(1-\alpha_n)B + \alpha_n B = B$. Hence $\|U_{n+1}\|_\infty \leq B$,
completing the induction. All bounds in \eqref{eq:uniform-bounds} are checked.

\paragraph{Fast iterates' convergence to a stochastic target.}
Define the tracking errors
\begin{align*}
  E_n &:= m_n - \mu[U_n], \\
  \widetilde{E}_n &:= g_n - \nu[U_n].
\end{align*}
We show that $E_n\ra 0$ and $\widetilde{E}_n\ra 0$ a.s. as $n\ra\infty$. 

To show this, we define the martingale noise terms
\[
  \xi_{n+1} := Z_{n+1} - \mu[U_n], \qquad 
  \widetilde{\xi}_{n+1} := Z_{n+1}^2 - \nu[U_n].
\]
and the bias terms
\begin{equation}\label{eqn:bias_term}
  \Delta_n := \mu[U_n] - \mu[U_{n+1}], \qquad \widetilde{\Delta}_n := \nu[U_n] - \nu[U_{n+1}].
\end{equation}
Then the errors can be recursively written as
\begin{equation}\label{eqn:error_recursion}
\begin{aligned}
  E_{n+1} &= m_{n+1} - \mu[U_{n+1}] = (1-\beta_n)\,E_n + \beta_n\,\xi_{n+1} + \Delta_n, \\
  \widetilde{E}_{n+1} &= g_{n+1} - \nu[U_{n+1}] = (1-\beta_n)\,\widetilde{E}_n + \beta_n\,\widetilde{\xi}_{n+1} + \widetilde{\Delta}_n.
\end{aligned}
\end{equation}

To establish the convergence of this recursion, we will analyze the squared error $ E_{n+1}^2$ and $\widetilde{E}_{n+1}^2$ as supermartingale-type processes. Specifically, we introduce the following variant of the Robbins--Siegmund Theorem for the convergence of almost supermartingales. We provide a proof in Appendix \ref{app:proof_of_RS}.

\begin{lemma}[Corollary of \citet{robbins1971convergence}]\label{lem:robbins-siegmund}
Let $\set{V_n:n \geq 1}$ be a nonnegative $\mathcal{F}_n$-adapted process. Suppose there exist nonnegative deterministic sequences $\set{a_n,b_n:n \geq 1}$ such that
\begin{equation}\label{eq:RS-condition}
  \mathbb{E}[V_{n+1} \mid \mathcal{F}_n] \leq (1 - a_n)\,V_n + b_n \qquad \text{a.s.\ for all } n\ge 1.
\end{equation}
Assume further that
\begin{equation}\label{eq:RS-summability}
  \sum_{n=1}^\infty a_n = \infty, \quad \text{and} \quad \sum_{n=1}^\infty b_n < \infty.
\end{equation}
Then $V_n \to 0$ a.s. as $n\ra\infty$.
\end{lemma}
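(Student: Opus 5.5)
The plan is to deduce Lemma~\ref{lem:robbins-siegmund} from the classical Robbins--Siegmund theorem. That theorem states: if $\E[V_{n+1}\mid\cF_n]\le(1+\zeta_n)V_n+b_n-c_n$ with nonnegative adapted $\zeta_n,b_n,c_n$ and $\sum\zeta_n<\infty$, $\sum b_n<\infty$ a.s., then $V_n$ converges a.s. to a finite limit $V_\infty$ and $\sum c_n<\infty$ a.s. We apply it with $\zeta_n=0$ and $c_n=a_nV_n$. This step needs $a_n\le 1$, or at least $1-a_n\ge -1$ so that the recursion makes sense; if some $a_n>1$, we replace $a_n$ by $\min(a_n,1)$. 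This preserves \eqref{eq:RS-condition} because $V_n\ge0$, and it preserves divergence of $\sum a_n$, since either infinitely many truncations occur (each contributing $1$) or the tail is unchanged. The first conclusion is then that $V_n\to V_\infty$ a.s. and $\sum_n a_nV_n<\infty$ a.s.

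Next we identify the limit. On the full-probability event where both conclusions hold, suppose $V_\infty(\omega)>0$. Then $V_n(\omega)\ge V_\infty(\omega)/2$ for all large $n$. This gives $\sum a_nV_n(\omega)\ge (V_\infty(\omega)/2)\sum_{n\ge n_0}a_n=\infty$, contradicting summability. Hence $V_\infty=0$ a.s., which is the claim.

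If we want to avoid citing Robbins--Siegmund directly, we can instead give a self-contained supermartingale argument. Set $W_n:=V_n+\sum_{k\ge n}b_k$. Then $\E[W_{n+1}\mid\cF_n]\le W_n-a_nV_n\le W_n$, so $W_n$ is a nonnegative supermartingale. It therefore converges a.s., and so does $V_n$, because the tail sums are deterministic and tend to zero. Taking expectations and telescoping gives $\sum_n a_n\E V_n\le \E V_1+\sum b_n<\infty$, from which $\sum a_nV_n<\infty$ a.s. by monotone convergence. Integrability of $V_1$ is implicitly needed here. In our application $V_n$ is bounded by \eqref{eq:uniform-bounds}; in general one localizes on $\{V_1\le K\}$, since the conditional recursion is preserved under multiplication by this $\cF_1$-measurable indicator.

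The only delicate point is this integrability/localization issue together with the truncation of $a_n$. The rest is routine.
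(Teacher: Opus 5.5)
Your argument is correct and is essentially the paper's proof: you invoke Robbins--Siegmund with zero multiplicative term, $\xi_n=b_n$ and $\zeta_n=a_nV_n$ to get $V_n\to V_\infty$ and $\sum_n a_nV_n<\infty$ a.s., then use $\sum_n a_n=\infty$ to rule out $V_\infty>0$ pathwise. The truncation $a_n\mapsto\min(a_n,1)$ is harmless but unnecessary, since the theorem only needs $\zeta_n=a_nV_n\ge 0$, which holds for any $a_n\ge 0$; your self-contained supermartingale route via $V_n+\sum_{k\ge n}b_k$, with localization on $\{V_1\le K\}$, is also valid.
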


Define the following filtration corresponding to the synchronous sampling process: 
$$\mathcal{F}_n:= \sigma\crbk{U_0, m_0, g_0,\ \{X_k'(s,a) : 1 \le k \le n,\ (s,a) \in \mathcal{S} \times \mathcal{A}\}}.$$ 
To apply Lemma \ref{lem:robbins-siegmund}, we first establish some preliminary bounds. Conditioning on $\cF_n$ and using $X'_{n+1}(z) \sim P_0(\cdot | z)$, we have
\[
  \mathbb{E}[Z_{n+1}(z) \mid \mathcal{F}_n] = \mu[U_n](z),
\quad\text{and}\quad
\mathbb{E}[Z_{n+1}(z)^2 \mid \mathcal{F}_n] = \nu[U_n](z).
\]
Hence,
\begin{equation}\label{eqn:mg_noise}\mathbb{E}[\xi_{n+1} \mid \mathcal{F}_n] = 0,\quad \text{and}\quad \mathbb{E}[\widetilde{\xi}_{n+1} \mid \mathcal{F}_n] = 0;
\end{equation}
i.e. $\xi_n$ and $\widetilde{\xi}_{n}$ are indeed martingale noise sequences. 
Moreover, by \eqref{eq:uniform-bounds}, we have $|Z_{n+1}(z)| \leq B$ a.s., which implies $|\mu[U_n](z)| \leq B$ and $0 \leq |\nu[U_n](z)| \leq B^2$. Consequently,
\begin{equation}\label{eq:noise-bounds}
  \|\xi_{n+1}\|_\infty \leq 2B, \quad\text{and}\quad \|\widetilde{\xi}_{n+1}\|_\infty \leq 2B^2.
\end{equation}

To bound bias terms in \eqref{eqn:bias_term}, we first note that since the slow update \eqref{eqn:slow_update_rewrite} is completely deterministic, $U_{n+1}$ is $\cF_{n}$ measurable, and so are $\Delta_n$ and $\widetilde\Delta_{n}$. 

Next, for any $U, \widetilde{U}$ with $\|U\|_\infty, \|\widetilde{U}\|_\infty \leq B$, we have $|v[U](x)|, |v[\widetilde{U}](x)| \leq B$. Then, for all $z\in \cZ$ and $x\in \cS$, 
\[
  |\mu[U](z) - \mu[\widetilde{U}](z)| \leq \E_z[|v[U](X') - v[\widetilde{U}](X')|] \leq \|U - \widetilde{U}\|_\infty.
\]
and 
\[
  |v[U](x)^2 - v[\widetilde{U}](x)^2| = |v[U](x) - v[\widetilde{U}](x)|\,|v[U](x) + v[\widetilde{U}](x)| \leq 2B\|U - \widetilde{U}\|_\infty.
\]
Therefore, 
\begin{equation}\label{eq:mu-lip}
  \|\mu[U] - \mu[\widetilde{U}]\|_\infty \leq \|U - \widetilde{U}\|_\infty,
\end{equation}
and
\begin{equation}\label{eq:nu-lip}
  \|\nu[U] - \nu[\widetilde{U}]\|_\infty \leq 2B\|U - \widetilde{U}\|_\infty. 
\end{equation}
Moreover, From the slow update \eqref{eqn:slow_update},
\[
U_{n+1} - U_n = \alpha_n\crbk{r + \gamma m_n - \gamma\sqrt{2\delta}\,\sigma_{n,\varepsilon} - U_n},
\]
and using the uniform bounds~\eqref{eq:uniform-bounds}, we obtain the coarse estimate
\begin{equation}\label{eq:incrementU}
  \|U_{n+1} - U_n\|_\infty \leq 2B\,\alpha_n.
\end{equation}
Combining with~\eqref{eq:mu-lip} -\eqref{eq:incrementU} yields
\begin{equation}\label{eq:mismatch-bounds}
  \|\Delta_n\|_\infty = \|\mu[U_n] - \mu[U_{n+1}]\|_\infty \leq 2B\,\alpha_n, \qquad
  \|\widetilde{\Delta}_n\|_\infty = \|\nu[U_n] - \nu[U_{n+1}]\|_\infty \leq 4B^2\,\alpha_n.
\end{equation}

With these bound, we apply Lemma \ref{lem:robbins-siegmund} to $E_{n+1}(z)^2$ and $\widetilde E_{n+1}(z)^2$, respectively. We expand $E_{n+1}(z)^2$ and conditioning on $\mathcal{F}_n$, yielding
\begin{align*}
\mathbb{E}\left[E_{n+1}(z)^2\mid\mathcal F_n\right]
&=\mathbb E\left[\crbk{(1-\beta_n)E_n(z)+\beta_n\xi_{n+1}(z)+\Delta_n(z)}^2 \mid \mathcal F_n\right]\\
&=\mathbb{E}\Big[(1-\beta_n)^2 E_n(z)^2
+ \beta_n^2 \xi_{n+1}(z)^2
+ \Delta_n(z)^2\\
&\hspace{2.6cm}
+ 2(1-\beta_n)E_n(z)\,\beta_n\xi_{n+1}(z)
+ 2(1-\beta_n)E_n(z)\,\Delta_n(z)\\
&\hspace{2.6cm}
+ 2(\beta_n\xi_{n+1}(z))\,\Delta_n(z)\ \Bigm|\ \mathcal F_n\Big]\\
&\stackrel{(i)}{=}
(1-\beta_n)^2 E_n(z)^2 + 2(1-\beta_n)E_n(z)\Delta_n(z)+\Delta_n(z)^2
+ \beta_n^2\,\mathbb E[\xi_{n+1}(z)^2\mid \mathcal F_n]\\
&\stackrel{(ii)}{\le}
(1-\beta_n)E_n(z)^2+\frac{\beta_n}{2}E_n(z)^2+\left(1+\frac{2}{\beta_n}\right)\Delta_n(z)^2
+ \beta_n^2\,\mathbb E[\xi_{n+1}(z)^2\mid \mathcal F_n]\\
&= \left(1-\frac{\beta_n}{2}\right)E_n(z)^2+\left(1+\frac{2}{\beta_n}\right)\Delta_n(z)^2
+ \beta_n^2\,\mathbb E[\xi_{n+1}(z)^2\mid \mathcal F_n]\\
&\stackrel{(iii)}{\le} \left(1-\frac{\beta_n}{2}\right)E_n(z)^2
+\left(1+\frac{2}{\beta_n}\right)4B^2\alpha_n^2
+ 4B^2\beta_n^2. 
\end{align*}
where
\begin{itemize}
    \item $(i)$ uses \eqref{eqn:mg_noise} and hence the cross term involving $\xi_{n+1}(z)$ vanishes.
    \item $(ii)$ comes from two inequalities: with $\beta_n\in[0,1]$, $(1-\beta_n)^2\le 1-\beta_n$ and hence
\[
2(1-\beta_n)|E_n(z)|\,|\Delta_n(z)|
\le 2|E_n(z)|\,|\Delta_n(z)|
\le \frac{\beta_n}{2}E_n(z)^2+\frac{2}{\beta_n}\Delta_n(z)^2,
\]
which applies the fact that $2ab\le \frac{\beta_n}{2}a^2+\frac{2}{\beta_n}b^2$, with $a=|E_n(z)|$, $b=|\Delta_n(z)|$.
    \item $(iii)$ uses $|\Delta_n(z)|\le 2B\alpha_n$ and $\mathbb E[\xi_{n+1}(z)^2\mid \mathcal F_n]\le 4B^2$.
\end{itemize}
Since $\beta_n \leq 1$ implies $1 + 2/\beta_n \leq 3/\beta_n$, there exist constants $c_1 = 12B^2$ and $c_2 = 4B^2$ such that
\begin{equation}\label{eq:E-sq-bound}
\mathbb{E}[E_{n+1}(z)^2 \mid \mathcal{F}_n]
\le \crbk{1 - \tfrac{\beta_n}{2}} E_n(z)^2 + c_1 \frac{\alpha_n^2}{\beta_n} + c_2 \beta_n^2,
\end{equation}
The identical argument applied to $\widetilde{E}_{n+1}(z) = (1-\beta_n)\,\widetilde{E}_n(z) + \beta_n\,\widetilde{\xi}_{n+1}(z) + \widetilde{\Delta}_n(z)$ yields 
\begin{equation}\label{eq:Etilde-sq-bound}
  \mathbb{E}\sqbk{\widetilde{E}_{n+1}(z)^2 \mid \mathcal{F}_n}
  \leq \crbk{1 - \frac{\beta_n}{2}}\,\widetilde{E}_n(z)^2 + \widetilde{c}_1\,\frac{\alpha_n^2}{\beta_n} + \widetilde{c}_2\,\beta_n^2.
\end{equation}
where $\widetilde{c}_1 = 48B^4$ and $\widetilde{c}_2 = 4B^4$.

Then, we check summability before applying Lemma \ref{lem:robbins-siegmund}. With the step sizes $\alpha_n = a/(n+a)$ and $\beta_n =  b/(n+a)^{\tau}$ with $\tau \in (1/2, 1)$, we have the following
\begin{enumerate}
\item[(i)] $\sum_{n=1}^\infty \beta_n = \infty$: Since $$\sum_{n=1}^\infty \beta_n = b \sum_{n=a+1}^\infty n^{-\tau} = \infty.$$
\item[(ii)] $\sum_{n=1}^\infty \beta_n^2 < \infty$: We have $\beta_n^2 = b^2(n+a)^{-2\tau}$, and since $2\tau > 1$,
$$\sum_{n=1}^\infty \beta_n^2 = b^2 \sum_{n=a+1}^\infty n^{-2\tau} < \infty.$$
\item[(iii)] $\sum_{n=1}^\infty \alpha_n^2/\beta_n < \infty$: Note that for $n \geq 1$,
$\alpha_n^2/\beta_n = a^2b^{-1} (n+a)^{-(2-\tau)}.$
Since $\tau < 1$, we have $2 - \tau > 1$. Thus, $\sum_{n=1}^\infty \alpha_n^2/\beta_n < \infty$.
\end{enumerate}

Fix $z\in\cZ$. Set $V_n := E_n(z)^2$, $a_n := \beta_n/2 \in [0,1]$, and $b_n := c_1\,\alpha_n^2/\beta_n + c_2\,\beta_n^2 \geq 0$. Then~\eqref{eq:E-sq-bound} reads
\[
  \mathbb{E}[V_{n+1} \mid \mathcal{F}_n] \leq (1 - a_n)\,V_n + b_n,
\]
with $\sum_n a_n = \infty$ by (i) and $\sum_n b_n < \infty$ by (ii) and (iii). By Lemma~\ref{lem:robbins-siegmund}, $V_n \to 0$ a.s., which is $E_n(z) \to 0$ a.s. Applying the same argument to ~\eqref{eq:Etilde-sq-bound} yields $\widetilde{E}_n(z) \to 0$ a.s. 

Since there are finitely many coordinates $z\in\cZ$, we conclude that
\begin{equation}\label{eq:tracking-convergence}
  \|E_n\|_\infty = \max_{z\in\cZ} |E_n(z)| \to 0  \quad\text{and}\quad
  \|\widetilde{E}_n\|_\infty = \max_{z\in\cZ} |\widetilde{E}_n(z)| \to 0 \quad\text{a.s.}
\end{equation}

\paragraph{Almost sure convergence to the fixed point.} Define the perturbation
\[
  \eta_n := \gamma E_n - \gamma\sqrt{2\delta}\crbk{\sqrt{g_n - m_n^2 + \varepsilon} - \sqrt{\nu[U_n] - \mu[U_n]^2 + \varepsilon}}.
\]
All square roots and squares above are interpreted elementwise in $z$. By direct algebra, the slow time-scale update rule \eqref{eqn:slow_update} can be written as
\begin{equation}\label{eq:rewrite_U_iter_with_eta}
U_{n+1} = U_n + \alpha_n \crbk{\cR_\varepsilon[U_n] - U_n + \eta_n}.
\end{equation}
To proceed, we analyze the convergence of this recursion by first showing the convergence of the error term $\eta_n\ra 0$ a.s. Then, using the contraction property of $\cR_\varepsilon$, we extend the convergence to $U_n\ra U^*_\epsilon$ a.s. 

To show the convergence of $\eta_n$, we define $x_n := g_n - m_n^2+\varepsilon$ and $y_n := \nu[U_n] - \mu[U_n]^2+\varepsilon$. Consider
\begin{align*}
  x_n - y_n
  &= (g_n - \nu[U_n]+\varepsilon) - (m_n^2 - \mu[U_n]^2+\varepsilon) \\
  &= \widetilde{E}_n - (m_n - \mu[U_n])(m_n + \mu[U_n]) \\
  &= \widetilde{E}_n - E_n\,(m_n + \mu[U_n]).
\end{align*}
Hence, by the uniform bounds~\eqref{eq:uniform-bounds},
\[
  \|x_n - y_n\|_\infty \leq \|\widetilde{E}_n\|_\infty + 2B\|E_n\|_\infty.
\]
We note that for $x, y \geq 0$, $|x-y| + |y| \geq |x|$, which implies $\sqrt{|x-y|} + \sqrt{|y|} \geq \sqrt{|x-y| + y} \geq \sqrt{x}$. Rearranging gives the bound
\[
  |\sqrt{x} - \sqrt{y}| \leq \sqrt{|x-y|}.
\]
Therefore, we have 
\[
  \bigl\|\sqrt{x_n} - \sqrt{y_n}\bigr\|_\infty \le \sqrt{\bigl\|x_n - y_n\bigr\|_\infty}
\]
where, by\eqref{eq:tracking-convergence}, the r.h.s. goes to 0 a.s. as $n\ra\infty$. 
Therefore, 
\begin{equation}\label{eqn:eta_converge}\norminf{\eta_n} \leq  \gamma\norminf{ E_n} +\gamma\sqrt{2\delta}\norminf{\sqrt{x_n}-\sqrt{y_n}}  \ra 0 
\end{equation}
a.s. as $n\ra\infty$. 

Next, to streamline the proof, we introduce the following lemma that guarantees the convergence of $U_n\ra U^*_\varepsilon$ by the convergence of $\eta_n$.  The proof of Lemma \ref{lem:slow_convergence} is deferred to Appendix \ref{app:proof_of_slow_conv}. 
\begin{lemma}
\label{lem:slow_convergence}
Let $\{U_n:n\ge 1\}$ satisfies the recursion \eqref{eq:rewrite_U_iter_with_eta}
with $\sum_{n} \alpha_n = \infty$ and $\alpha_n \to 0$, and 
$\|\eta_n\|_\infty \to 0$ a.s. Then $\|U_n - U^*_\varepsilon\|_\infty \to 0$
a.s.
\end{lemma}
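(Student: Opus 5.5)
The plan is to work with the error $D_n := U_n - U^*_\varepsilon$ and exploit the $L$-contraction of $\cR_\varepsilon$ from Theorem~\ref{thm:contraction_eps}. Using $U^*_\varepsilon = \cR_\varepsilon[U^*_\varepsilon]$, the recursion \eqref{eq:rewrite_U_iter_with_eta} becomes
\[
D_{n+1} = (1-\alpha_n) D_n + \alpha_n\crbk{\cR_\varepsilon[U_n] - \cR_\varepsilon[U^*_\varepsilon]} + \alpha_n \eta_n .
\]
Taking $\norminf{\cdot}$ and using the contraction, together with $\alpha_n\in[0,1]$ for $n$ large (which holds since $\alpha_n\to 0$), gives the deterministic-path inequality
\[
d_{n+1} \le \crbk{1-(1-L)\alpha_n}\, d_n + \alpha_n e_n, \qquad d_n := \norminf{D_n},\ e_n := \norminf{\eta_n}.
\]
The whole argument is then a pathwise statement: I fix an $\omega$ in the full-measure event on which $e_n(\omega)\to0$. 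No martingale arguments are needed, because the noise has already been absorbed into $\eta_n$.

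Fix such an $\omega$ and $\epsilon'>0$, and choose $N$ so that $e_n \le (1-L)\epsilon'$ for all $n\ge N$. For $n\ge N$ I then get
\[
d_{n+1} - \epsilon' \le \crbk{1-(1-L)\alpha_n}\crbk{d_n - \epsilon'} .
\]
Iterating this from $N$ yields
\[
\crbk{d_n-\epsilon'}^+ \le \prod_{k=N}^{n-1}\crbk{1-(1-L)\alpha_k}\,\crbk{d_N-\epsilon'}^+ \le \exp\!\Big(-(1-L)\sum_{k=N}^{n-1}\alpha_k\Big)\, d_N .
\]
This bound tends to $0$ because $\sum_k\alpha_k=\infty$. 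Hence $\limsup_n d_n\le\epsilon'$, and since $\epsilon'>0$ is arbitrary, $d_n\to 0$ on this event.

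Two small points need care. The first is finiteness of $d_N$. This holds trivially, since every iterate is finite, and under the uniform bound \eqref{eq:uniform-bounds} it is even uniformly bounded. The second is the sign handling when $d_n<\epsilon'$: the factor $1-(1-L)\alpha_n$ is nonnegative, so the positive part propagates correctly. I do not expect a genuine obstacle here. The only delicate step is making sure the contraction is applied to the deterministic map $U\mapsto\cR_\varepsilon[U]$ with $\eta_n$ treated as an exogenous, pathwise-vanishing perturbation. That is what reduces the problem to the classical deterministic lemma that $\sum\alpha_n=\infty$ and a vanishing forcing term imply a vanishing error.
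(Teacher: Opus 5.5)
Your proposal is correct and follows essentially the same route as the paper: both reduce to the pathwise inequality $\|U_{n+1}-U^*_\varepsilon\|_\infty \le (1-(1-L)\alpha_n)\|U_n-U^*_\varepsilon\|_\infty + \alpha_n\|\eta_n\|_\infty$. Both then fix $\omega$ and a tolerance, propagate the positive part $(d_n-\epsilon')^+$ through the nonnegative factors, and use $\prod_k(1-(1-L)\alpha_k)\le \exp(-(1-L)\sum_k\alpha_k)\to 0$ to conclude $\limsup_n d_n\le\epsilon'$; the only cosmetic difference is that you get $\alpha_n\le 1$ eventually from $\alpha_n\to0$, whereas the paper takes $\alpha_n\le 1$ directly from its step-size choice.
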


Equipped with this lemma, we show the convergence of the fast iterates. Write
\begin{align*}
  m_n - \mu[U_{\varepsilon}^{*}] &= \crbk{m_n - \mu[U_n]} + \crbk{\mu[U_n] - \mu[U_{\varepsilon}^{*}]}, \\
  g_n - \nu[U_{\varepsilon}^{*}] &= \crbk{g_n - \nu[U_n]} + \crbk{\nu[U_n] - \nu[U_{\varepsilon}^{*}]}.
\end{align*}
The first terms converge to $0$ a.s. by~\eqref{eq:tracking-convergence}. The second terms converge to $0$ a.s. by the Lipschitz continuity~\eqref{eq:mu-lip} and~\eqref{eq:nu-lip} together with $U_n \to U_{\varepsilon}^{*}$ in Lemma \ref{lem:slow_convergence}. Hence
\[
  m_n \to \mu[U_{\varepsilon}^{*}] = m_{\varepsilon}^{*} \quad\text{and}\quad 
  g_n \to \nu[U_{\varepsilon}^{*}] = g_{\varepsilon}^{*} \quad\text{a.s.}
\]
This completes the proof of part (i) of Theorem~\ref{thm:convergence-clt}. \qed

\subsection{Proof of the Asymptotic Normality}

To establish the asymptotic normality of the proposed algorithm, we would like to apply the following two-time-scale SA central limit theorem \citet[Theorem~1]{MokkademPelletier2006}. Specifically, the theorem concerns the two-time-scale SA updates of the form 
\begin{equation}\label{eqn:SA_update_MP}
\begin{aligned}
\theta_{n+1} &= \theta_n + \alpha_n X_{n+1}, \\
\mu_{n+1} &= \mu_n + \beta_n Y_{n+1},
\end{aligned}
\end{equation}
where $\theta_n \in \mathbb{R}^d$ and $\mu_n \in \mathbb{R}^{d'}$. The $\set{X_n,Y_n:n\geq1}$ are designed to find the unique zero $0 = f(\theta,\mu)$ and $0 = h(\theta,\mu)$. We first state the required assumptions. 

\begin{assumption}\label{assump:SA_MP}
    Assume the following conditions hold for the updates \eqref{eqn:SA_update_MP}. 

\begin{enumerate}
\item[(A1)] $\theta_n \to \theta^*$ and $\mu_n \to \mu^*$ almost surely.

\item[(A2)]
\begin{enumerate}
\item[(i)] There exists a neighborhood $\mathcal U$ of $(\theta^*,\mu^*)$ such that, for all $(\theta,\mu)\in\mathcal U$,
\begin{align}
\begin{bmatrix}
f(\theta,\mu)\\
h(\theta,\mu)
\end{bmatrix}
=
\begin{bmatrix}
Q_{11} & Q_{12}\\
Q_{21} & Q_{22}
\end{bmatrix}
\begin{bmatrix}
\theta-\theta^*\\
\mu-\mu^*
\end{bmatrix}
+
O\left(
\left\|
\begin{bmatrix}
\theta-\theta^*\\
\mu-\mu^*
\end{bmatrix}
\right\|^2
\right).\label{eqn:smoothness_Jacobian}
\end{align}
\item[(ii)] Define 
\[
H := Q_{11} - Q_{12} Q_{22}^{-1} Q_{21}, 
\quad \text{and} \quad 
\Lambda(A) := - \max \{ \Re(\lambda) : \lambda \in \mathrm{Spectrum}(A) \}.
\]
Then $\Lambda(H) > 0$ and $\Lambda(Q_{22}) > 0$.
\end{enumerate}

\item[(A3)]
\begin{enumerate}
\item[(i)] The step sizes satisfy $\alpha_n=a n^{-c},\beta_n=b n^{-d},$ with $a>0$, $b>0$, and
\(
\frac12 < d < c \le 1.
\)
\item[(ii)] If $c=1$, then
\(
a  > \frac{1}{2\Lambda(H)}.
\)
\end{enumerate}

\item[(A4)] The error-contaminated observations admit the decomposition
\begin{align*}
X_{n+1} &= f(\theta_n,\mu_n) + \psi_n^{(\theta)} + V_{n+1},\\
Y_{n+1} &= h(\theta_n,\mu_n) + \psi_n^{(\mu)} + W_{n+1},
\end{align*}
and, denoting by $\mathcal F_n$ the natural filtration, the following hold:
\begin{enumerate}
\item[(i)] $\mathbb E[V_{n+1}\mid \mathcal F_n]=0$ and $\mathbb E[W_{n+1}\mid \mathcal F_n]=0$ almost surely.
\item[(ii)] There exists a positive semidefinite matrix that is partitioned as
\[
\Gamma=
\begin{bmatrix}
\Gamma_{11} & \Gamma_{12}\\
\Gamma_{21} & \Gamma_{22}
\end{bmatrix}\in \R^{(d+d') \times (d+ d')}
\]
such that
\[
\mathbb E\left[
\begin{bmatrix}
V_{n+1}\\
W_{n+1}
\end{bmatrix}
\begin{bmatrix}
V_{n+1}^\top & W_{n+1}^\top
\end{bmatrix}
\Biggm| \mathcal F_n
\right]
\to \Gamma
\qquad\text{almost surely.}
\]
\item[(iii)] There exists $m>2/a$ such that
\[
\sup_n \mathbb E\left[\|V_{n+1}\|^m \mid \mathcal F_n\right] < \infty
\quad \text{and}\quad
\sup_n \mathbb E\left[\|W_{n+1}\|^m \mid \mathcal F_n\right] < \infty
\]
almost surely.
\item[(iv)] The remainder terms satisfy
\begin{align*}
\psi_n^{(\theta)} &= r_n^{(\theta)} + O\left(\|\theta_n-\theta^*\|^2+\|\mu_n-\mu^*\|^2\right),\\
\psi_n^{(\mu)} &= r_n^{(\mu)} + O\left(\|\theta_n-\theta^*\|^2+\|\mu_n-\mu^*\|^2\right),
\end{align*}
with $\|r_n^{(\theta)}\|+\|r_n^{(\mu)}\| = o(\sqrt{\beta_n})$ almost surely.
\end{enumerate}
\end{enumerate}
\end{assumption}
\begin{remark}
    In \citet{MokkademPelletier2006}, the authors state Assumption A4 (ii) in terms of a ``positive'' matrix $\Gamma$. A careful reading of the proof shows that positivity here refers to the positive semidefiniteness of $\Gamma$ as a covariance matrix.
\end{remark}

We now state the main result in \citet{MokkademPelletier2006} for reference.

\begin{theorem}[\citet{MokkademPelletier2006}]\label{thm:MP}

Suppose Assumption \ref{assump:SA_MP} holds for the SA updates \eqref{eqn:SA_update_MP}. Define
\[
\Gamma_\theta
=
\Gamma_{11}
+Q_{12}Q_{22}^{-1}\Gamma_{22}(Q_{22}^{-1})^\top Q_{12}^\top
-\Gamma_{12}(Q_{22}^{-1})^\top Q_{12}^\top
-Q_{12}Q_{22}^{-1}\Gamma_{21},
\]
and let $\Sigma_\theta$ and $\Sigma_\mu$ be the unique solutions of the Lyapunov equations
\[
\left(H+\frac{\mathbf 1_{\{c=1\}}}{2a}I\right)\Sigma_\theta
+
\Sigma_\theta
\left(H^\top+\frac{\mathbf 1_{\{c=1\}}}{2a}I\right)
=
-\Gamma_\theta,
\quad \text{and}\quad 
Q_{22}\Sigma_\mu + \Sigma_\mu Q_{22}^\top = -\Gamma_{22}.
\]
Then
\[
\begin{bmatrix}
\alpha_n^{-1/2}(\theta_n-\theta^*)\\
\beta_n^{-1/2}(\mu_n-\mu^*)
\end{bmatrix}
\Rightarrow
\mathcal N\left(
0,
\begin{bmatrix}
\Sigma_\theta & 0\\
0 & \Sigma_\mu
\end{bmatrix}
\right).
\]
\end{theorem}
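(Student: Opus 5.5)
Since Theorem~\ref{thm:MP} is quoted from \citet{MokkademPelletier2006}, the paper only needs to verify Assumption~\ref{assump:SA_MP} for MVSA. Still, here is how I would prove it directly, reducing it to two decoupled linear recursions driven by martingale noise.

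\textbf{Step 1: change of variables.} Set $\bar\theta_n := \theta_n-\theta^*$. Define the fast error relative to the slow manifold,
\[
\bar\mu_n := \mu_n-\mu^* + Q_{22}^{-1}Q_{21}\bar\theta_n .
\]
By (A2)(i) and (A4), the pair then satisfies
\begin{align*}
\bar\theta_{n+1} &= \bar\theta_n + \alpha_n\bigl(H\bar\theta_n + Q_{12}\bar\mu_n + V_{n+1} + \rho_n^{(\theta)}\bigr),\\
\bar\mu_{n+1} &= \bar\mu_n + \beta_n\bigl(Q_{22}\bar\mu_n + W_{n+1} + \rho_n^{(\mu)}\bigr) + \alpha_n Q_{22}^{-1}Q_{21}\bigl(H\bar\theta_n + Q_{12}\bar\mu_n + V_{n+1}\bigr).
\end{align*}
The remainders $\rho_n$ collect the terms $r_n$ and the quadratic terms $O(\|\bar\theta_n\|^2+\|\bar\mu_n\|^2)$. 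The last term in the $\bar\mu$ recursion is of relative order $\alpha_n/\beta_n\to0$.

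\textbf{Step 2: a priori rates.} I would show
\[
\|\bar\theta_n\|+\|\bar\mu_n\| = O\bigl(\sqrt{\beta_n\log\textstyle\sum_k\beta_k}\bigr)\quad\text{a.s.}
\]
The tools are (A1), the stability conditions $\Lambda(Q_{22}),\Lambda(H)>0$ from (A2)(ii), and the moment bound (A4)(iii) to control martingale increments. With these rates, and since $d>1/2$ and $r_n=o(\sqrt{\beta_n})$, the remainders $\rho_n$ contribute $o(\sqrt{\beta_n})$ to the fast component. I expect this step to be the main technical obstacle: the quadratic remainder is only controllable once some rate is known, which forces a localization or bootstrap argument. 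I would first work on the event $\{(\theta_n,\mu_n)\in\mathcal U$ for all $n\ge N\}$, whose probability tends to one by (A1). On that event I would use a Lyapunov function built from the Lyapunov equations for $Q_{22}$ and $H$.

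\textbf{Step 3: fast component.} Unroll the $\bar\mu$ recursion with the products $\prod_{j}(I+\beta_jQ_{22})$. Up to negligible terms, $\bar\mu_n$ is then a weighted sum of $\beta_kW_{k+1}$. The martingale CLT, using (A4)(ii) and the Lindeberg condition from (A4)(iii), gives
\[
\beta_n^{-1/2}\bar\mu_n\Rightarrow\mathcal N(0,\Sigma_\mu),\qquad Q_{22}\Sigma_\mu+\Sigma_\mu Q_{22}^\top=-\Gamma_{22}.
\]
This transfers to $\mu_n-\mu^*$ because $\beta_n^{-1/2}\bar\theta_n\to0$ in probability. That convergence holds since $\bar\theta_n=O_P(\sqrt{\alpha_n})$ and $\alpha_n/\beta_n\to0$.

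\textbf{Step 4: slow component.} The key point is that the coupling term $\alpha_nQ_{12}\bar\mu_n$ is not negligible at the $\sqrt{\alpha_n}$ scale. Solving the fast recursion for $\bar\mu_n$ gives
\[
\beta_n\bar\mu_n = -Q_{22}^{-1}\bigl(\beta_nW_{n+1}-(\bar\mu_{n+1}-\bar\mu_n)\bigr)+\text{small}.
\]
Multiply by $\alpha_n/\beta_n$ and sum by parts, using that $\alpha_n/\beta_n$ is slowly varying and decreasing. This yields
\[
\sum_k \alpha_kQ_{12}\bar\mu_k = -\sum_k\alpha_kQ_{12}Q_{22}^{-1}W_{k+1}+o_P(\sqrt{\alpha_n}),
\]
with the sum weighted by the $\bar\theta$ propagators. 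The slow recursion then becomes a linear recursion with drift $H$ and effective martingale noise $V_{n+1}-Q_{12}Q_{22}^{-1}W_{n+1}$. By (A4)(ii), the conditional covariance of this noise converges to $\Gamma_\theta$. Another application of the martingale CLT gives $\alpha_n^{-1/2}\bar\theta_n\Rightarrow\mathcal N(0,\Sigma_\theta)$. When $c=1$, the ratio $\alpha_{n}/\alpha_{n+1}=1+\alpha_n/a+o(\alpha_n)$ produces the shift $H+I/(2a)$, and (A3)(ii) ensures this shifted matrix is stable.

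\textbf{Step 5: joint limit.} To obtain the joint limit with zero off-diagonal blocks, I would apply a joint martingale CLT (Cramér--Wold) to the pair of weighted sums. The cross-covariance is a sum of products of the two kernels. The slow kernel is of order $\alpha_k$ and varies on a much longer horizon than the fast kernel, which is concentrated on the last $O(1/\beta_n)$ steps. After normalizing by $(\alpha_n\beta_n)^{-1/2}$, the cross term is therefore $O(\sqrt{\alpha_n/\beta_n})\to0$.
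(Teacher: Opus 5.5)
The paper gives no proof of this statement. It imports it from \citet{MokkademPelletier2006} and only verifies the hypotheses for MVSA. Your outline follows the standard architecture of that proof: slow-manifold coordinates, a priori a.s.\ rates via localization, absorbing the coupling $\alpha_nQ_{12}\bar\mu_n$ by summation by parts into the effective noise $V_{n+1}-Q_{12}Q_{22}^{-1}W_{n+1}$ (whose limiting covariance is exactly $\Gamma_\theta$), the $I/(2a)$ shift when $c=1$, and an $O(\sqrt{\alpha_n/\beta_n})$ cross-covariance. That bookkeeping is right.

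The gap is in the words ``small'' and $o_P(\sqrt{\alpha_n})$ in Step~4. There every leftover term must be negligible at the slow scale $\sqrt{\alpha_n}$, whereas Step~2 only controls remainders at the fast scale $\sqrt{\beta_n}$. The quadratic parts are fine: your a priori rate makes them $O(\beta_n\log\sum_{k\le n}\beta_k)=o(\sqrt{\alpha_n})$ because $d>1/2\ge c/2$, which you should say explicitly. The $r_n$ terms are not fine. First, $r^{(\theta)}_n$ is propagated by $\prod_j(I+\alpha_jH)$ into a bias that can be of the same order as $r^{(\theta)}_n$. Second, $r^{(\mu)}_n$ re-enters the slow equation through $\alpha_nQ_{12}\bar\mu_n$ as $-\alpha_nQ_{12}Q_{22}^{-1}r^{(\mu)}_n$. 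With (A4)(iv) as written, i.e.\ $o(\sqrt{\beta_n})$ with $\beta_n$ the \emph{fast} step, this cannot be repaired, because the statement is then false. Take scalar $f(\theta,\mu)=-\theta$ and $h(\theta,\mu)=-\mu$ (so $H=Q_{22}=-1$), with $c=1$, $d=0.6$, bounded i.i.d.\ noise, and $r^{(\theta)}_n=\beta_n^{0.6}=o(\sqrt{\beta_n})$. Then $\mathbb{E}\theta_n\sim\frac{a}{a-0.36}\beta_n^{0.6}\asymp n^{-0.36}$, so $\alpha_n^{-1/2}(\theta_n-\theta^*)$ is not even tight.

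The moment hypothesis has the same defect, and it affects your Steps~2--3. Your Lindeberg claim needs uniformly bounded conditional moments of order $m>2$. The truncation and Borel--Cantelli argument behind an a.s.\ rate $O(\sqrt{\beta_n\log\sum_{k\le n}\beta_k})$ typically needs $m>2/d$. But (A4)(iii) only asks for $m>2/a$, which allows $m\le 2$ once $a>1$. For example, take $a=3$, $m=1$, and $W_{n+1}=\pm\beta_n^{-1}$ with probability $\beta_n^2/2$ each, $0$ otherwise. This noise has unit variance and bounded first moment. Yet Borel--Cantelli gives only finitely many jumps, so $\beta_n^{-1/2}(\mu_n-\mu^*)\to 0$ a.s.\ instead of converging to $\mathcal N(0,\Sigma_\mu)$.

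These look like slips in how the hypotheses of \citet{MokkademPelletier2006} were transcribed, not flaws in your strategy. If you assume $\|r^{(\theta)}_n\|+\|r^{(\mu)}_n\|=o(\sqrt{\alpha_n})$ and $m>2/d$, your outline is sound, with Step~2 remaining the technical core. For MVSA both conditions are moot, since $\psi^{(\theta)}_n=\psi^{(\mu)}_n=0$ and $W_{n+1}$ is bounded. As a proof of the displayed statement, however, Step~4 and the Lindeberg claim in Step~3 rest on hypotheses you do not have.
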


Equipped with this Theorem, we present our proof of part (ii) of Theorem \ref{thm:convergence-clt}. 

\begin{proof}[Proof of part (ii) of Theorem \ref{thm:convergence-clt}]
We first setup the the two-time-scale SA recursion induced by Algorithm \ref{alg:mvsa}
in the two-time-scale standard form as presented in Theorem \ref{thm:MP}. Specifically, write

\begin{align}
    U_{n+1} &= U_n + \alpha_n\, f(U_n, (m_n,g_n)), \label{eq:SA_slow}\\
    \begin{bmatrix} m_{n+1} \\ g_{n+1} \end{bmatrix}
    &= \begin{bmatrix} m_{n} \\ g_{n} \end{bmatrix} + \beta_n\crbk{h(U_n, (m_n,g_n)) + W_{n+1}}, \label{eq:SA_fast}
\end{align}
where the drift functions are defined by
\begin{align}
    f(U, (m, g)) &:= r + \gamma\, m - \gamma\sqrt{2\delta}\,\sqrt{g - m^2+\varepsilon} - U,\label{eq:drift_f}\\
    h(U, (m, g)) &:= \begin{bmatrix} \mu[U] - m \\ \nu[U] - g \end{bmatrix} \label{eq:drift_h}.
\end{align}
Moreover, the noise term driving the fast recursion is given by
\begin{equation}\label{eq:noise_W}
    W_{n+1} := \begin{bmatrix} Z_{n+1} - \mu[U_n] \\ Z_{n+1}^2 - \nu[U_n] \end{bmatrix},
\end{equation}
where $Z_{n+1}(z) := v[U_n](X'_{n+1}(z))$ for each $z\in\cZ$. We note that functions in \eqref{eq:drift_f} are applied element-wise. Under this setup, we verify the assumptions A1--A4.

\subsubsection*{A1: Almost sure convergence}
By Theorem~\ref{thm:contraction_eps}, $\cR_\varepsilon$ admits a unique fixed point $U^{*}_\varepsilon$, and we define
\(
m^{*}_\varepsilon := \mu[U^*_\varepsilon],\) \(g^{*}_\varepsilon := \nu[U^*_\varepsilon]
\), so that 
\begin{align*}
    f(U^{*}_\varepsilon, (m^{*}_\varepsilon, g^{*}_\varepsilon)) = r + \gamma\, m^{*}_\varepsilon - \gamma\sqrt{2\delta}\sqrt{g^{*}_\varepsilon - m^{*2}_\varepsilon + \varepsilon} - U^{*}_\varepsilon = 0,
\end{align*}
as well as
\begin{align*}
    h(U^{*}_\varepsilon, (m^{*}_\varepsilon, g^{*}_\varepsilon)) = \begin{bmatrix} \mu[U^*_\varepsilon] - m^{*}_\varepsilon \\ \nu[U^*_\varepsilon] - g^{*}_\varepsilon \end{bmatrix} = 0.
\end{align*}
Hence $(U^{*}_\varepsilon, (m^{*}_\varepsilon, g^{*}_\varepsilon))$ is a fixed point of the drift functions $f$ and $h$. Moreover, by the proven part (i) of Theorem \ref{thm:convergence-clt}, 
\begin{equation}\label{eq:as_conv}
    (U_n, m_n, g_n) \xrightarrow{\text{a.s.}} (U^{*}_\varepsilon, m^{*}_\varepsilon, g^{*}_\varepsilon).
\end{equation}
This verifies A1. 

\subsubsection*{A2: Local smoothness and Hurwitz linearization} We note that in this step, the additional Assumption \ref{ass:unique-greedy} is applied to guarantee local differentiability at the fixed point, an integral requirement for a Gaussian CLT. As in Remark \ref{rmk:unique_greedy_clt}, one cannot expect a Gaussian CLT if there are multiple optimal actions under $U^*_\varepsilon$ at the same state. 
\paragraph{Local smoothness.} We first verify the local smoothness of the drift functions $f$ and $h$ in a neighborhood of the fixed point $(U^{*}_\varepsilon, (m^{*}_\varepsilon, g^{*}_\varepsilon))$. To obtain the expansion in \eqref{eqn:smoothness_Jacobian}, it suffices to check that $f,h$ are locally $C^2$. The dependence on $U$ enters through two components: the max operator $v[U]$ and the nonlinear term $(m,g) \mapsto \sqrt{g - m^2 + \varepsilon}$. We address each in turn.

With Assumption~\ref{ass:unique-greedy}, we have that at $U^*_\varepsilon$, there is a strictly positive action value gap:
\[
  \Delta_{\min} := \min_{s \in \mathcal{S}} \crbk{ U_{\varepsilon}^{*}(s, a^*(s)) - \max_{b \neq a^*(s)} U_{\varepsilon}^{*}(s,b) } > 0.
\]
Thus, for any $s \in \cS$ and $b \neq a^*(s)$
\begin{align*}
        U(s, a^*(s)) - U(s, b)
        &= \crbk{U^{*}_\varepsilon(s, a^*(s)) - U^{*}_\varepsilon(s, b)}
        + \crbk{U(s, a^*(s)) - U^{*}_\varepsilon(s, a^*(s))}
        - \crbk{U(s, b) - U^{*}_\varepsilon(s, b)} \\
        &\geq \Delta_{\min} - 2\|U - U^{*}_\varepsilon\|_\infty > 0,
\end{align*}
whenever $\|U - U^{*}_\varepsilon\|_\infty < \Delta_{\min}/2$. Therefore $a^*(s)$ remains the unique maximizer, and $v[U](s)$ is a linear function of $U$ in this neighborhood:
\[
v[U](s)
= \sum_{a\in\cA} \pi^*(a\mid s)\, U(s,a),
\quad
\text{where } \quad
\pi^*(a\mid s)
=
\begin{cases}
1, & a = a^*(s),\\
0, & \text{otherwise}.
\end{cases}
\]

Therefore, in the neighborhood $\{U : \|U - U^{*}_\varepsilon\|_\infty < \Delta_{\min}/2\}$, the mapping $U \mapsto v[U]$ is linear. Consequently, the mappings $U \mapsto \mu[U]$ and $U \mapsto \nu[U]$ are $C^\infty$. In particular, all partial derivatives of them exist and are locally bounded.

For the nonlinear term $(m,g) \mapsto \sqrt{g - m^2 + \varepsilon}$, if $\varepsilon > 0$, then $g - m^2 + \varepsilon$ is automatically bounded away from zero, so the square root is well defined and smooth in a neighborhood of $U^{*}_\varepsilon$ without any additional assumptions. If $\varepsilon = 0$, the non-degeneracy condition is in force: $\sigma[U^{*}_\varepsilon](z) > 0$ for all $z \in \cZ$. Define the minimum variance level at the root by
\[
 {\sigma} := \min_{z\in\cZ} \sigma[U^{*}_\varepsilon](z) > 0.
\]

\noindent
Since $\mu$ and $\nu$ are continuous functions of $U$, the mapping
\(
U \mapsto \nu[U](z) - \mu[U](z)^2
\)
is continuous for each $z\in\cZ$. Moreover,
\[
\nu[U^*_\varepsilon](z) - \mu[U^*_\varepsilon](z)^2
= \sigma[U^{*}_\varepsilon](z)^2
\ge  \sigma^2 > 0.
\]
Hence, by continuity, there exists $\eta>0$ such that for all $U$ satisfying
$\|U-U^{*}_\varepsilon\|_\infty<\eta$,
\[
\nu[U](z) - \mu[U](z)^2
\ge  \sigma^2 - \frac{ \sigma^2}{2} = \frac{ \sigma^2}{2},
\qquad \forall z\in\cZ.
\]
Then on the open set $\{(m,g) : g - m^2 \geq  {\sigma}^2/2\}$, the mapping $(m,g) \mapsto \sqrt{g - m^2+\varepsilon}$ is $C^\infty$ with bounded derivatives. 

\paragraph{Compute linearization and verify contraction.}
With the smoothness of $f$ and $h$ established, we now compute the Jacobian matrix at the fixed point $(U^{*}_\varepsilon, (m^{*}_\varepsilon, g^{*}_\varepsilon))$ and show that it is Hurwitz. Recall the definition of $Q$ matrices from the linearization \eqref{eqn:smoothness_Jacobian}. We compute them as follows.  

\begin{itemize}
    \item \textbf{Block \(Q_{11}\).}
Since \(f(U,(m,g))\) depends on \(U\) only through the linear term \(-U\), we have
\[
Q_{11}
:=
\nabla_U f(U^{*}_\varepsilon, (m^{*}_\varepsilon, g^{*}_\varepsilon))
=
-I_d.
\]

\item \textbf{Block \(Q_{22}\).}
Since \(h(U,(m,g))=(\mu[U]-m,\ \nu[U]-g)\), differentiating with respect to \((m,g)\) yields
\[
Q_{22}
:=
\nabla_{(m,g)} h(U^{*}_\varepsilon, (m^{*}_\varepsilon, g^{*}_\varepsilon))
=
-I_{2d}.
\]
All eigenvalues of \(Q_{22}\) are equal to \(-1\) so that
\(
\Lambda(Q_{22}) > 0.
\)
This verifies the additional requirement on $Q_{22}$ in A2 (ii). 

\item\textbf{Block \(Q_{21}\).}
Differentiating \(h(U,(m,g))\) with respect to \(U\) gives
\[
Q_{21}
:=
\nabla_U h(U^{*}_\varepsilon, (m^{*}_\varepsilon, g^{*}_\varepsilon))
=
\begin{bmatrix}
\nabla_U \mu[U^*_\varepsilon]\\[2mm]
\nabla_U \nu[U^*_\varepsilon]
\end{bmatrix}
=
\begin{bmatrix}
C\\
D
\end{bmatrix},
\]
where \(C,D\in\mathbb R^{d\times d}\) have entries, for \(z=(s,a)\) and \(z'=(s',a')\),
\[
C(z,z')
=
\left.\frac{\partial \mu[U](z)}{\partial U(z')}\right|_{U=U^{*}_\varepsilon}
=
P_0(s'\mid z)\,\1\{a'=a(s')\},
\]
and
\[
D(z,z')
=
\left.\frac{\partial \nu[U](z)}{\partial U(z')}\right|_{U=U^{*}_\varepsilon}
=
2\,P_0(s'\mid z)\,U^{*}_\varepsilon\crbk{s',a(s')}\,\1\{a'=a(s')\}.
\]

\item\textbf{Block \(Q_{12}\).}
Differentiating \(f(U,(m,g))\) with respect to \((m,g)\).
Then \(Q_{12}:=\nabla_{(m,g)} f(U^{*}_\varepsilon, (m^{*}_\varepsilon, g^{*}_\varepsilon))\in\mathbb R^{d\times 2d}\)
can be written as
\[
Q_{12}=\begin{bmatrix}A\ B\end{bmatrix},
\]
where \(A,B\in\mathbb R^{d\times d}\) are diagonal matrices with diagonal entries
\[
A(z,z)
=
\gamma
-
\gamma\sqrt{2\delta}\,
\left.\frac{\partial}{\partial m}\sqrt{g-m^2+\varepsilon}\right|_{(m,g)=(m^{*}_\varepsilon(z),g^{*}_\varepsilon(z))}
=
\gamma+\,\frac{\gamma\sqrt{2\delta}m^{*}_\varepsilon(z)}{\sigma_{\varepsilon}^{*}(z)},
\]
and
\[
B(z,z)
=
-\gamma\sqrt{2\delta}\,
\left.\frac{\partial}{\partial g}\sqrt{g-m^2+\varepsilon}\right|_{(m,g)=(m^{*}_\varepsilon(z),g^{*}_\varepsilon(z))}
=
-\,\frac{\gamma\sqrt{2\delta}}{2\,\sigma_{\varepsilon}^{*}(z)},
\]
with $\sigma_{\varepsilon}^{*}(z):=\sqrt{g^{*}_\varepsilon(z)-m^{*}_\varepsilon(z)^2+\varepsilon}$. All off-diagonal entries of \(A\) and \(B\) are zero.

\end{itemize}
As in A2 (ii) we consider
\[
H := Q_{11}-Q_{12}Q_{22}^{-1}Q_{21}=-I_d+Q_{12}Q_{21}.
\]
Writing \(Q_{12}=\begin{bmatrix}A\ B\end{bmatrix}\) and \(Q_{21}=\begin{bmatrix}C\\ D\end{bmatrix}\), it follows that
\[
K:=Q_{12}Q_{21}=AC+BD,
\qquad
H=K-I_d.
\]
Equivalently, in $(z,z')$-entry form we have
\[
H(z,z')
=
- I(z,z')
+ \crbk{A(z,z)\,C(z,z') + B(z,z)\,D(z,z')}.
\]
Expanding $A,B,C,D$ yields the explicit representation
\begin{align*}
H(z,z') = -I(z,z')  + \crbk{1 + \frac{\sqrt{2\delta}m_{\varepsilon}^*(z)}{\sigma_{\varepsilon}^*(z)} -\frac{\sqrt{2\delta}U_{\varepsilon}^*(z')}{\,\sigma_{\varepsilon}^*(z)}}\gamma Q_0(z,z').
\end{align*}
as specified in \eqref{eq:H}.

To complete the verification of A2, it remains to show that $H$ is Hurwitz; i.e., all of its eigenvalues have strictly negative real parts. Rather than checking this directly from the explicit form of $H$---which is possible, but would require careful algebra involving the properties of $Q_0$, $U^*_\varepsilon$, and $m_\varepsilon^*$---we instead exploit the global contraction of the Bellman update to deduce the required spectral property.

Under the smoothness established above, the operator $\mathcal{R_\varepsilon}$ is differentiable in the neighborhood of $U^{*}_\varepsilon$. Specifically, fix any $h \in \mathbb{R}^d$,
\[
\nabla \mathcal{R}_\varepsilon(U) h
= \lim_{t \to 0} \frac{\mathcal{R}_\varepsilon(U + th) - \mathcal{R}_\varepsilon(U)}{t}.
\]
Taking $\|\cdot\|_\infty$ and by Theorem \ref{thm:contraction}, there exists $L \in (0,1)$ such that 
\[
\left\| \frac{\mathcal{R}_\varepsilon(U + th) - \mathcal{R}_\varepsilon(U)}{t} \right\|_\infty
\le \frac{L \|th\|_\infty}{|t|}
= L \|h\|_\infty,
\]
for all $t \neq 0$. Passing to the limit gives
\[
\|\nabla \mathcal{R}_\varepsilon(x) h\|_\infty \le L \|h\|_\infty,
\]
and hence we conclude
\begin{equation}\label{eq:derivative-bound}
\|\nabla \mathcal{R}_\varepsilon(x)\|_{\infty \to \infty}
= \sup_{\|h\|_\infty = 1} \|\nabla \mathcal{R}_\varepsilon(x) h\|_\infty
\le L.
\end{equation}
We also observe that the $U$-Jacobian of $\mathcal{R_\varepsilon}$ at $U^{*}_\varepsilon$ is
\begin{equation}\label{eq:nabla_R}
    \nabla \mathcal{R_\varepsilon}[U^*_\varepsilon] = Q_{12} Q_{21} = K.
\end{equation}
Applying \eqref{eq:derivative-bound} and \eqref{eq:nabla_R} to $\mathcal{R_\varepsilon}$ yields
\[
\|K\|_{\infty \to \infty} = \|\nabla \mathcal{R_\varepsilon}[U^*_\varepsilon]\|_{\infty \to \infty} \leq L.
\]
Let $\lambda$ be any eigenvalue of $K$. Then
\[
|\lambda| \leq \rho(K) \leq \|K\|_{\infty \to \infty} \leq L,
\]
where $\rho(K)$ denotes the spectral radius. Hence every eigenvalue of $H = K - I_d$ is of the form $\lambda - 1$ and satisfies $\Re(\lambda - 1) \leq |\lambda| - 1 \leq L - 1 < 0$. We obtain the quantitative bound
\begin{equation}\label{eq:Lambda_H}
    \Lambda(H) \geq 1 - L > 0.
\end{equation}
Consequently, the drifts admit the local expansion \eqref{eqn:smoothness_Jacobian} with
\[
f(U,m,g)-f(U^*_\varepsilon,m^*_\varepsilon,g^*_\varepsilon)
=
Q_{11}(U-U^*_\varepsilon)
+
Q_{12}
\begin{bmatrix}
m-m^*_\varepsilon\\
g-g^*_\varepsilon
\end{bmatrix}
+
O\left(\|(U,m,g)-(U^*_\varepsilon,m^*_\varepsilon,g^*_\varepsilon)\|^2\right),
\]
\[
h(U,m,g)-h(U^*_\varepsilon,m^*_\varepsilon,g^*_\varepsilon)
=
Q_{21}(U-U^*_\varepsilon)
+
Q_{22}
\begin{bmatrix}
m-m^*_\varepsilon\\
g-g^*_\varepsilon
\end{bmatrix}
+
O\left(\|(U,m,g)-(U^*_\varepsilon,m^*_\varepsilon,g^*_\varepsilon)\|^2\right).
\]
\subsubsection*{A3: Step sizes}

Recall that the step sizes are prescribed by Assumption~\ref{ass:step-sizes}; i.e. 
\begin{equation}\label{eq:stepsizes}
    \alpha_n = \frac{a}{n+a}, \qquad \beta_n = \frac{b}{(n+a)^{\tau}}, \qquad \tau \in (1/2, 1),
\end{equation}
with integer $a > 1/(2(1 - L))$ and $0< b\leq a^\tau$. Note that from \eqref{eq:Lambda_H}, $\Lambda(H)\geq 1-L$. So, $a > 1/(2\Lambda(H))$, satisfying A3 (ii). Thus, A3 holds by applying Theorem~\ref{thm:MP} with the shifted index $k=n+a$. Since a constant shift does not affect the asymptotic distribution, the CLT continues to hold with the same covariance structure.

\subsubsection*{A4: Noise and bias in the SA update}
We observe that, under the two-time-scale SA formulation of the proposed algorithm in \eqref{eq:SA_slow} and \eqref{eq:SA_fast}, the corresponding noise and bias sequences $\set{V_n,\psi_n^{(\theta)},\psi_n^{(\mu)}}$ in A4 of Theorem~\ref{thm:MP} are identically zero. So, we only need to check A4 (i)--(iii) for the $\set{W_n}$ sequence defined in \eqref{eq:noise_W}.

By the sampling mechanism \eqref{eqn:sample_Z} of the algorithm, for each $z\in\cZ$ we have
\[
\mathbb E\left[ Z_{n+1}(z)\mid \mathcal F_n\right]
=
\mu[U_n](z)\quad \text{and}
\quad
\mathbb E\left[ Z_{n+1}(z)^2\mid \mathcal F_n\right]
=
\nu[U_n](z).
\]
Therefore, \(\mathbb E[W_{n+1}\mid \mathcal F_n]=0\); checking (i).

To check (ii), define the conditional covariance matrix of the fast noise by
\begin{equation}\label{eq:Gamma22n}
    \Gamma_{22,n} := \mathbb{E}\sqbk{W_{n+1} W_{n+1}^\top \mid \mathcal{F}_n} \in \mathbb{R}^{2d \times 2d}.
\end{equation}
In the synchronous sampling algorithm, conditional on $\mathcal{F}_n$, the fresh samples $\{X'_{n+1}(z):z \in \cZ\}$ are independent across $z$. Since $\mathbb{E}[W_{n+1} \mid \mathcal{F}_n] = 0$, there are no cross terms between different coordinates. Hence we get
\[
\Gamma_{22,n}
=
\begin{bmatrix}
\Gamma_{mm,n} & \Gamma_{mg,n} \\
\Gamma_{gm,n} & \Gamma_{gg,n}
\end{bmatrix},
\qquad \Gamma_{gm,n}=\Gamma_{mg,n}^\top .
\]
\[
\Gamma_{mm,n}=\diag\crbk{V_n}, \qquad
\Gamma_{mg,n}=\diag\crbk{C_n}, \qquad
\Gamma_{gg,n}=\diag\crbk{W_n}.
\]
where for each $z\in\cZ$ with $Z_n(z):=v[U_n](X'(z))$,
\begin{align*}
V_n(z) &:= \var\crbk{Z_n(z)},\\
C_n(z) &:= \cov\crbk{Z_n(z), Z_n(z)^2},\\
W_n(z) &:= \var\crbk{Z_n(z)^2}.
\end{align*}
Noting that $U_n \to U^{*}_\varepsilon$ almost surely and $Z_n$ is bounded, we take $n \to \infty$ and apply bounded convergence yields 
\begin{align*}
    \Gamma_{22,n}\to \Gamma_{22}
    = \begin{bmatrix}
      \Gamma_{mm} & \Gamma_{mg} \\
      \Gamma_{gm} & \Gamma_{gg}
    \end{bmatrix},
\end{align*}
almost surely. This is exactly the covariance representation stated in \eqref{eq:gamma22} constructed with moment quantities \eqref{eq:VCW}.
Since there is no slow noise term, the full conditional covariance matrix has the form $\Gamma = \text{diag}(0, \Gamma_{22})$, with $\Gamma_{11} = \Gamma_{12} = \Gamma_{21} = 0$.

Then we check the moment condition. By the uniform bound $|Z_{n+1}(z)| \leq B$, we have $|Z_{n+1}(z) - \mu[U_n](z)| \leq 2B$ and $|Z_{n+1}(z)^2 - \nu[U_n](z)| \leq 2B^2$ almost surely. Setting $c_B := 2\max\{B, B^2\}$, it follows that $\|W_{n+1}\|_\infty \leq c_B$ almost surely. Therefore, for any $m > 0$,
\[
    \sup_n \mathbb{E}\sqbk{\|W_{n+1}\|_\infty^m \mid \mathcal{F}_n} \leq c_B^m < \infty.
\]

\subsubsection*{Compute covariance and conclude the CLT}
With the required assumptions verified, the Mokkadem--Pelletier theorem yields the joint convergence
\begin{equation}\label{eq:CLT}
  \begin{bmatrix} \sqrt{\alpha_n^{-1}}\,(U_n - U_{\varepsilon}^{*}) \\[4pt] \sqrt{\beta_n^{-1}}\,\crbk{(m_n, g_n) - (m^{*}_\varepsilon, g^{*}_\varepsilon)} \end{bmatrix}
  \Ra
  \cN\left( 0, \begin{bmatrix} \Sigma_U & 0 \\ 0 & \Sigma_{(m,g)} \end{bmatrix} \right).
\end{equation}
We compute the asymptotic covariance matrices from the ingredients obtained earlier in the proof. First, we recall that with $U$ replacing $\theta$ in Theorem \ref{thm:MP},
\[
    \Gamma_U := \Gamma_{11} + Q_{12} Q_{22}^{-1} \Gamma_{22} (Q_{22}^{-1})^\top Q_{12}^\top - \Gamma_{12} (Q_{22}^{-1})^\top Q_{12}^\top - Q_{12} Q_{22}^{-1} \Gamma_{21}.
\]
Since $\Gamma_{11} = \Gamma_{12} = \Gamma_{21} = 0$, and $Q_{22} = -I_{2d}$, this reduces to
\[
    \Gamma_U = Q_{12}\, \Gamma_{22}\, Q_{12}^\top.
\]
Since $Q_{12} = \begin{bmatrix} A & B \end{bmatrix}$ and the samples are independent across coordinates, $\Gamma_U$ is diagonal with entries, for each $z\in\cZ$,
\[
    \Gamma_U(z,z)
    =
    A(z,z)^2\, V(z) + 2\, A(z,z)\, B(z,z)\, C(z) + B(z,z)^2\, W(z),
\]
and $\Gamma_U(z,z')=0$ for $z\neq z'$, which is exactly \eqref{eq:gammaU}.

The fast Lyapunov equation $Q_{22}\, \Sigma_{(m,g)} + \Sigma_{(m,g)}\, Q_{22}^\top = -\Gamma_{22}$ reduces, by $Q_{22} = -I_{2d}$, to the explicit solution
\[
    \Sigma_{(m,g)} = \frac{1}{2}\,\Gamma_{22}.
\]
The slow Lyapunov equation determines $\Sigma_U$ as the unique solution of
\[
    \crbk{H + \frac{1}{2a}\, I}\Sigma_U + \Sigma_U
    \crbk{H + \frac{1}{2a}\, I}^\top = -\Gamma_U = -Q_{12}\, \Gamma_{22}\, Q_{12}^\top.
\]
which coincides with \eqref{eq:sigmaU}.

This completes the proof of Theorem \ref{thm:convergence-clt}. 
\end{proof}

\section{Proof of Auxiliary Lemmas}
\subsection{Proof of Lemma \ref{lem:robbins-siegmund}}\label{app:proof_of_RS}

\begin{proof}
We show the claim using the Robbins--Siegmund almost-supermartingale convergence theorem in \citet{robbins1971convergence}, which we recall here for completeness.

\begin{theorem}Let $Z_n, \beta_n, \xi_n, \zeta_n$ be nonnegative $\mathcal{F}_n$-measurable sequences such that
\[
  \mathbb{E}[Z_{n+1} \mid \mathcal{F}_n] \leq (1 + \beta_n)\,Z_n + \xi_n - \zeta_n \qquad \text{a.s.\ for all } n,
\]
satisfying $\sum_{n=1}^\infty \xi_n < \infty$ and $\sum_{n=1}^\infty \beta_n < \infty$ a.s. Then $Z_n$ converges a.s.\ to a finite random variable and $\sum_{n=1}^\infty \zeta_n < \infty$ almost surely.
\end{theorem}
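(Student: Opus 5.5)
The statement just recalled is the classical Robbins--Siegmund theorem. My plan is to prove it by normalizing away the multiplicative factor $(1+\beta_n)$, building a supermartingale, and localizing with stopping times so the supermartingale convergence theorem applies. Afterwards I will derive Lemma~\ref{lem:robbins-siegmund} from it.

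\emph{Normalization.} Define $p_n := \prod_{k=1}^{n-1}(1+\beta_k)$. This is $\mathcal F_{n-1}$-measurable, nondecreasing, at least $1$, and converges a.s. to a finite limit $p_\infty$, because $\sum_k \beta_k<\infty$ a.s. and $\log(1+x)\le x$. Set $Z_n' := Z_n/p_n$, $\xi_n' := \xi_n/p_{n+1}$ and $\zeta_n' := \zeta_n/p_{n+1}$. Dividing the hypothesis by the $\mathcal F_n$-measurable quantity $p_{n+1}=(1+\beta_n)p_n$ gives
\[
\mathbb E[Z'_{n+1}\mid\mathcal F_n]\le Z'_n+\xi'_n-\zeta'_n .
\]
Moreover $\sum_n\xi_n'\le\sum_n\xi_n<\infty$ a.s.

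\emph{Supermartingale and localization.} Let $Y_n := Z_n' + \sum_{k<n}\zeta_k' - \sum_{k<n}\xi_k'$. By the previous display, $\{Y_n\}$ is an $\mathcal F_n$-supermartingale. It need not be bounded below, because $\sum\xi'$ is only a.s. finite and not uniformly bounded.

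For $c>0$, define $\tau_c := \inf\{n : \sum_{k\le n}\xi_k' > c\}$. This is a stopping time because $\sum_{k\le n}\xi'_k$ is $\mathcal F_n$-measurable; this measurability is the point to check carefully. The stopped process $Y_{n\wedge\tau_c}$ is a supermartingale, and it is bounded below by $-c$, since only the terms $\xi'_k$ with $k<\tau_c$ enter. By the supermartingale convergence theorem, $Y_{n\wedge\tau_c}$ converges a.s. to a finite limit.

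On the event $\{\sum_k\xi_k'\le c\}$ we have $\tau_c=\infty$, so $Y_n$ itself converges there. Taking the union over $c\in\mathbb N$ covers an event of probability one, so $Y_n$ converges a.s.

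\emph{Conclusion of the theorem.} Adding back the convergent series $\sum_{k<n}\xi_k'$, the quantity $Z_n'+\sum_{k<n}\zeta_k'$ converges a.s. Both summands are nonnegative and the second is nondecreasing, so $\sum_k\zeta_k'<\infty$ and $Z_n'$ converges. Multiplying back by $p_n\to p_\infty\in[1,\infty)$ shows that $Z_n$ converges a.s. to a finite limit. It also gives $\sum_k\zeta_k \le p_\infty\sum_k\zeta_k'<\infty$.

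\emph{Deriving Lemma~\ref{lem:robbins-siegmund}.} Apply the theorem with $Z_n=V_n$, $\beta_n=0$, $\xi_n=b_n$ and $\zeta_n=a_nV_n$. Then $V_n\to V_\infty$ a.s. and $\sum_n a_nV_n<\infty$ a.s. On the event $\{V_\infty>0\}$ we would have $a_nV_n\ge a_nV_\infty/2$ eventually. Since $\sum a_n=\infty$, this contradicts $\sum_n a_nV_n<\infty$, so $V_\infty=0$ a.s.

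I expect the only delicate step to be the localization argument, namely verifying that $\tau_c$ is a stopping time and that the stopped process is bounded below. The rest is bookkeeping.
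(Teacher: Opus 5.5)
Your argument is correct. The paper has no proof of this statement to compare against: it quotes the theorem from Robbins and Siegmund (1971) and uses it as a black box. Its own work is only the reduction to Lemma~\ref{lem:robbins-siegmund}: take $Z_n=V_n$, $\beta_n=0$, $\xi_n=b_n$, $\zeta_n=a_nV_n$, then rule out $V_\infty>0$ using $\sum_n a_n=\infty$. Your final paragraph reproduces that reduction essentially verbatim.

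What you add is a self-contained proof of the cited theorem, and it is the classical Robbins--Siegmund argument. You normalize by $p_n=\prod_{k<n}(1+\beta_k)$, fold the partial sums of $\xi'$ and $\zeta'$ into the supermartingale $Y_n$, stop at $\tau_c$ so the process is bounded below by $-c$, and exhaust over $c\in\mathbb{N}$. Your checks that $\tau_c$ is a stopping time, that the stopped process is bounded below by $-c$, and that $\sum_k\zeta_k\le p_\infty\sum_k\zeta'_k$ are all right. The citation buys brevity; your version makes the appendix independent of the reference.

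One step needs tightening: the supermartingale convergence theorem presupposes integrability. Nothing in the hypotheses makes $Z_1=Y_1$ integrable, so the conditional expectations have to be read in the extended sense for nonnegative variables, and you need one more localization. Since $\{Z_1\le M\}\in\mathcal{F}_1$, the process $(Y_{n\wedge\tau_c}+c)\mathbf{1}_{\{Z_1\le M\}}$ is a nonnegative supermartingale with expectations bounded by $M+c$, so it converges a.s. Letting $M$ run through $\mathbb{N}$ then gives convergence of $Y_{n\wedge\tau_c}$ on $\{Z_1<\infty\}$, which has probability one.
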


We apply this theorem with the identifications
\[
  Z_n := V_n, \qquad \xi_n := b_n, \qquad \beta_n = 0, \qquad \zeta_n := a_n\,V_n.
\]
Since $V_n \geq 0$, we have $\zeta_n \geq 0$. Rewriting~\eqref{eq:RS-condition} gives
\[
  \mathbb{E}[V_{n+1} \mid \mathcal{F}_n] \leq V_n + b_n - a_n\,V_n = Z_n + \xi_n - \zeta_n,
\]
so the conditions of the theorem are satisfied. Moreover, \eqref{eq:RS-summability} implies $\sum_{n=1}^\infty \xi_n = \sum_{n=1}^\infty b_n < \infty$. Hence, the Robbins--Siegmund theorem applies and yields:
\begin{align}
  &V_n \to V_\infty \quad \text{a.s.\ for some finite } V_\infty \geq 0, \label{eq:RS-conv} \\
  &\sum_{n=1}^\infty a_n\,V_n < \infty \quad \text{a.s.} \label{eq:RS-sum}
\end{align}
It remains to show that $V_\infty = 0$ almost surely. 

Let $A\subset\Omega$ with $P(A) = 1$ such that for all $\omega\in A$, \eqref{eq:RS-conv} and \eqref{eq:RS-sum} hold. Suppose that for some $\omega\in A$, $V_\infty(\omega) > 0$. Then there exists $N(\omega)$ such that for all $n \geq N(\omega)$,
$V_n(\omega) \geq \tfrac{1}{2}\,V_\infty(\omega).$ But since $\sum_n a_n = \infty$, we must have that
\begin{align*}
  \sum_{n=1}^\infty a_n\,V_n(\omega) &\geq \sum_{n=N(\omega)}^\infty a_n\,V_n(\omega) \\
  &\geq \frac{1}{2}\,V_\infty(\omega) \sum_{n=N(\omega)}^\infty a_n \\
  &= \infty.
\end{align*}
This and \eqref{eq:RS-sum} implies that $$ 0 =P(\set{V_{\infty} > 0} \cap A) =P(V_{\infty} > 0); $$
i.e. $V_\infty = 0$ a.s. This completes the proof. 
\end{proof}

\subsection{Proof of Lemma \ref{lem:slow_convergence}}\label{app:proof_of_slow_conv}

\begin{proof}

By Theorem \ref{thm:contraction_eps}, $\cR_\varepsilon$ is an $L$-contraction in $\|\cdot\|_\infty$ and 
$U^*_\varepsilon = R_\varepsilon[U^*_\varepsilon]$. So, with the recursion in \eqref{eq:rewrite_U_iter_with_eta} and $\alpha_n\leq1$, we have
\begin{equation}
\begin{aligned}\label{eq:convergeU}
\|U_{n+1} - U^*_\varepsilon\|_\infty
&= \bigl\| U_n - U^*_\varepsilon 
+ \alpha_n \crbk{\cR_\varepsilon[U_n] 
- U_n  + \eta_n } \bigr\|_\infty \\
&= \bigl\| U_n - U^*_\varepsilon 
+ \alpha_n \crbk{\cR_\varepsilon[U_n] - \cR_\varepsilon[U^*_\varepsilon] 
- (U_n - U^*_\varepsilon) + \eta_n } \bigr\|_\infty \\
&\le (1-\alpha_n)\|U_n - U^*_\varepsilon\|_\infty 
+ \alpha_n \|R_\varepsilon[U_n] - R_\varepsilon[U^*_\varepsilon]\|_\infty 
+ \alpha_n \|\eta_n\|_\infty \\
&\le (1-\alpha_n)\|U_n - U^*_\varepsilon\|_\infty 
+ \alpha_n L \|U_n - U^*_\varepsilon\|_\infty 
+ \alpha_n \|\eta_n\|_\infty \\
&= \crbk{1 - (1-L)\alpha_n}\|U_n - U^*_\varepsilon\|_\infty 
+ \alpha_n \|\eta_n\|_\infty.
\end{aligned}
\end{equation}
We define
\[V_n := \|U_n - U^*_\varepsilon\|_\infty\quad \text{and}\quad
a_n := (1 - L)\alpha_n,\] and aim to show that $V_n\ra 0$ a.s.  

Fix any $\iota > 0$. Since $\|\eta_n\|_\infty \to 0$
a.s. by \eqref{eq:tracking-convergence}, for almost every $\omega\in\Omega$, there exists a finite random time $N(\omega)$ such that
$\|\eta_n(\omega)\|_\infty \leq (1-L)\iota$ for all $n \geq N(\omega)$. 

Then, for such $\omega$ and $n\geq N(\omega)$,
$\alpha_n \|\eta_n(\omega)\|_\infty\leq (1-L)\alpha_n \iota = a_n \iota$. Moreover, the inequality \eqref{eq:convergeU} gives
\begin{equation*}
    V_{n+1}(\omega) \leq (1-a_n)V_n(\omega) + a_n\iota
    = (1-a_n)(V_n(\omega) - \iota) + \iota.
\end{equation*}
Since $L < 1$ and $\alpha_n <1$, we have $(1-a_n) \in [0,1]$. Setting $W_n := (V_n - \iota)_+ \geq 0$, we obtain
\begin{align*}
    W_{n+1}(\omega) = (V_{n+1}(\omega) - \iota)_+
    &\leq \crbk{(1-a_n)(V_n(\omega) - \iota)}_+ 
    = (1-a_n)W_n(\omega).
\end{align*}
for all $n \geq N(\omega)$. Iterating the inequality 
$W_{n+1}(\omega) \leq (1-a_n)W_n(\omega)$ forward from $N(\omega)$ gives
\begin{equation}\label{eqn:tu_Wn_bd}
    W_n(\omega) \leq W_{N(\omega)}(\omega) \prod_{k=N(\omega)}^{n-1}(1-a_k).
\end{equation}

Using $1 - x \leq e^{-x}$ for $x \geq 0$, the product satisfies
\begin{equation*}
    \prod_{k=N(\omega)}^{n-1}(1-a_k) 
    \leq \exp\left(-\sum_{k=N(\omega)}^{n-1} a_k\right)
    \leq \exp\left(-\sum_{k=N(\omega)}^{n-1} (1-L)\alpha_k\right).
\end{equation*}
Since $\sum_k \alpha_k = \infty$, the exponent satisfies
\(
    \sum_{k=N(\omega)}^{n-1}(1-L)\alpha_k \to \infty\text{, as } n \to \infty,
\)
so that $\prod_{k=N(\omega)}^{n-1}(1-a_k) \to 0$. 

On the other hand, by the uniform boundedness \eqref{eq:uniform-bounds}, $W_{N(\omega)} \leq V_{N(\omega)}\vee \iota \leq2B \vee \iota < \infty$. So from \eqref{eqn:tu_Wn_bd}, we conclude that $W_n \to 0$ a.s. Therefore
\begin{equation*}
    \limsup_{n \to \infty} V_n \leq \iota \quad \text{a.s.}
\end{equation*}
Since $\iota > 0$ is arbitrary,
$V_n \to 0$ a.s.; i.e., $\|U_n - U_\varepsilon^*\|_\infty \to 0$ a.s.
\end{proof}

\FloatBarrier

\section{Additional Numerical Experiment Details}
\subsection{Algorithm Parameter Specifications}
We choose $\varepsilon = 10^{-6}$ for the stabilized operator $\mathcal{R}_\varepsilon$.  For figure \ref{fig:convergence-clt}, We run the MVSA algorithm with step size parameters
\(
a = 3.0,\ \tau = 0.9, \ b = a^\tau
\), satisfying Assumption \ref{ass:step-sizes}.

\subsection{Optimal Robust Q-Values} Figure~\ref{fig:Ustar} reports the 
fixed-point $U^*_\varepsilon(s,a)$ computed by value iteration on the inventory MDP with $\gamma = 0.7$, $\delta = 0.1$, $\varepsilon = 10^{-6}$. The left panel displays the full Q-value table as a heatmap, with red markers indicating the greedy action $a^*(s) = \arg\max_a U^*_\varepsilon(s,a)$  at each state. 

The right panel plots the induced optimal value function  $v^*(s) = \max_a U^*_\varepsilon(s,a)$ with the optimal action annotated. The greedy policy recovers the familiar base-stock structure: $a^*(s) = 0$ for sufficiently large inventory levels, with order quantities increasing as $s$ decreases.
\begin{figure}[ht]
    \centering
    \includegraphics[width=1\textwidth]{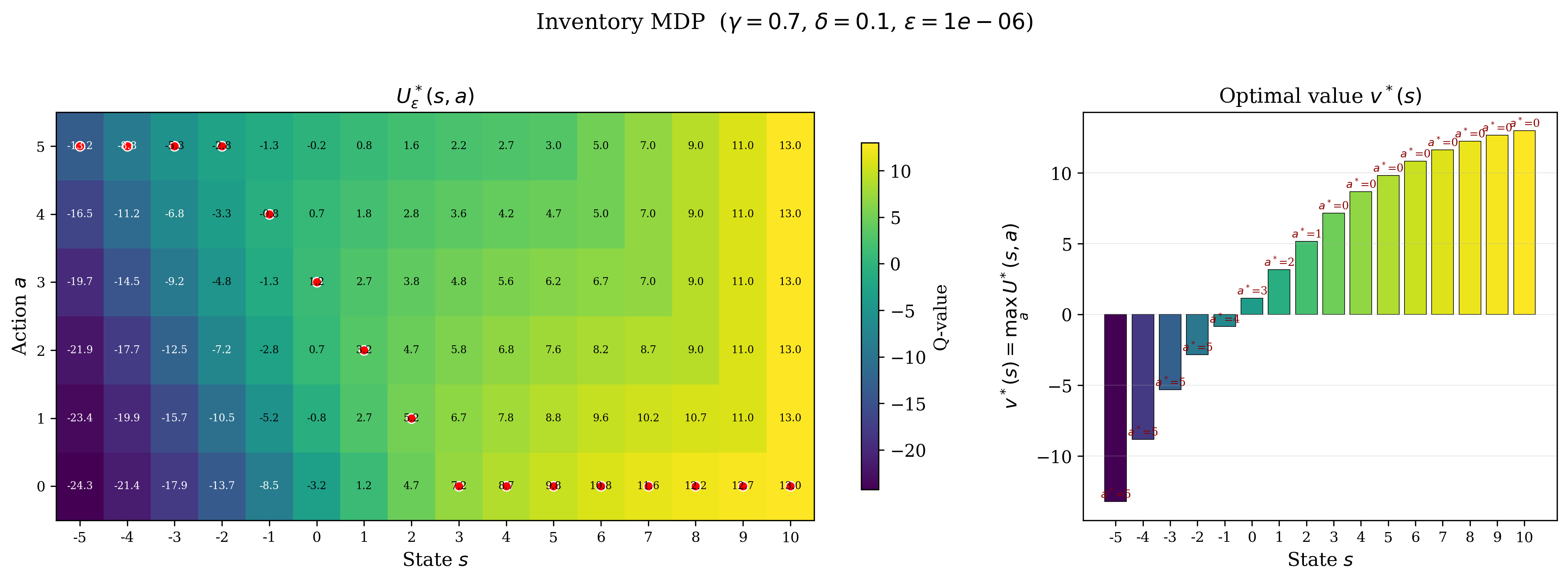}
    \caption{({Left}) Heatmap of $U^*_\varepsilon(s,a)$; red markers indicate the greedy action $a^*(s)$. ({Right}) Optimal value function $v^*(s) = \max_a U^*_\varepsilon(s,a)$ with optimal actions annotated.}
    \label{fig:Ustar}
\end{figure}

\end{document}